\newcommand{\alg}{\text{Exponentiated O2NC}}     
\theoremstyle{plain}
\newtheorem{theorem}{Theorem}[section]
\newtheorem{proposition}[theorem]{Proposition}
\newtheorem{lemma}[theorem]{Lemma}
\newtheorem{corollary}[theorem]{Corollary}
\theoremstyle{definition}
\newtheorem{definition}[theorem]{Definition}
\newtheorem{assumption}[theorem]{Assumption}
\theoremstyle{remark}
\newcommand{\STATE}{\State}
\newcommand{\FOR}{\For}
\newcommand{\ENDFOR}{\EndFor}
\title{Random Scaling and Momentum for Non-smooth Non-convex Optimization}
\author{
  Qinzi Zhang \\
  Boston University \\
  \texttt{qinziz@bu.edu} \\
  \and
  Ashok Cutkosky \\
  Boston University \\
  \texttt{cutkosky@bu.edu} \\
}
\begin{document}

\maketitle

\begin{abstract}
Training neural networks requires optimizing a loss function that may be highly irregular, and in particular neither convex nor smooth. Popular training algorithms are based on stochastic gradient  descent with momentum (SGDM), for which classical analysis applies only if the loss is either convex  or smooth. We show that a very small modification to SGDM closes this gap: simply scale the update at each time point by an exponentially  distributed random scalar. The resulting algorithm achieves optimal convergence guarantees. Intriguingly, this result is not derived by a specific analysis of SGDM: instead, it falls naturally out of a more general framework for converting online convex optimization algorithms to non-convex optimization algorithms.

\end{abstract}

\section{Introduction}

Non-convex optimization algorithms are one of the fundamental tools in modern machine learning, as training neural network models requires optimizing a non-convex loss function. This paper provides a new theoretical framework for building such algorithms. The simplest application of this framework almost exactly recapitulates the standard algorithm used in practice: stochastic gradient descent with momentum (SGDM).

The goal of any optimization algorithm used to train a neural network is to minimize a potentially non-convex objective function. Formally, given $F:\RR^d\to\RR$, the problem is to solve
\begin{equation*}
    \min_{\x\in\RR^d} F(\x) = \Ex_{z}[f(\x,z)],
\end{equation*}
where $f$ is a stochastic estimator of $F$. In practice, $\x$ denotes the parameters of a neural network model, and $z$ denotes the data point. Following the majority of the literature, we focus on first-order stochastic optimization algorithms, which can only access to the stochastic gradient $\nabla f(\x,z)$ as an estimate of the unknown true gradient $\nabla F(\x)$. We measure the ``cost'' of an algorithm by counting the number of stochastic gradient evaluations it requires to achieve some desired convergence guarantee. We will frequently refer to this count as the number of ``iterations'' employed by the algorithm.

When the objective function is non-convex, finding a global minimum can be intractable. To navigate this complexity, many prior works have imposed various smoothness assumptions on the objective. These include, but not limited to, first-order smoothness \cite{ghadimi2013stochastic, carmon2017convex, arjevani2022lower, carmon2019lower}, second-order smoothness \cite{tripuraneni2018stochastic, carmon2018accelerated, fang2019sharp, arjevani2020second}, and mean-square smoothness \cite{allen2018natasha, fang2018spider, cutkosky2019momentum, arjevani2022lower}. Instead of finding the global minimum, the smoothness conditions allow us to find an $\epsilon$-stationary point $\x$ of $F$ such that $\|\nabla F(\x)\|\le\epsilon$.

The optimal rates for smooth non-convex optimization are now well-understood. When the objective is smooth, stochastic gradient descent (SGD) requires $O(\epsilon^{-4})$ iterations to find $\epsilon$-stationary point, matching the optimal rate \cite{arjevani2019lower}. When $F$ is second-order smooth, a variant of SGD augmented with occasional random perturbations achieves the optimal rate $O(\epsilon^{-7/2})$ \cite{fang2019sharp, arjevani2020second}. Moreover, when $F$ is mean-square smooth, variance-reduction algorithms, such as SPIDER \cite{fang2018spider} and SNVRG \cite{zhou2018stochastic}, achieve the optimal rate $O(\epsilon^{-3})$ \cite{arjevani2019lower}. All of these algorithms can be viewed as variants of SGD.

In addition to these theoretical optimality results, SGD and its variants are also incredibly effective in practice across a wide variety of deep learning tasks. Among these variants, the family of momentum algorithms have become particularly popular \cite{sutskever2013importance, kingma2014adam, you2017scaling, you2019large, cutkosky2019momentum, cutkosky2020momentum, liu2021laprop}. Under smoothness conditions, the momentum algorithms also achieve the same optimal rates.

However, modern deep learning models frequently incorporate a range of non-smooth architectures, including elements like ReLU, max pooling, and quantization layers. These components result in a non-smooth optimization objective, violating the fundamental assumption of a vast majority of prior works. Non-smooth optimization is fundamentally more difficult than its smooth counterpart, as in  the worst-case \citet{kornowski2022oracle} show that it is actually  impossible to find a neighborhood around $\epsilon$-stationary. This underscores the need for a tractable convergence criterion in non-smooth non-convex optimization.

One line of research in non-smooth non-convex optimization studies weakly-convex objectives \citep{davis2019stochastic, mai2020convergence}, with a focus on finding $\epsilon$-stationary points of the Moreau envelope of the objectives. It has been demonstrated that various algorithms, including the proximal subgradient method and SGDM, can achieve the optimal rate of $O(\epsilon^{-4})$ for finding an $\epsilon$-stationary point of the Moreau envelope. However, it is important to note that the assumption of weak convexity is crucial for the convergence notion involving the Moreau envelope. Our interest lies in solving non-smooth non-convex optimization without relying on the weak convexity assumption.

To this end, \citet{zhang2020complexity} proposed employing Goldstein stationary points \citep{goldstein1977optimization} as a convergence target in non-smooth non-convex (and non-weakly-convex) optimization. This approach has been widely accepted by recent works studying non-smooth optimization \cite{kornowski2022complexity, lin2022gradientfree, kornowski2023algorithm, cutkosky2023optimal}. Formally, $\x$ is a $(\delta,\epsilon)$-Goldstein stationary point if there exists a random vector $\y$ such that $\E[\y]=\x$, $\|\y-\x\| \le \delta$ almost surely, and $\|\E[\nabla F(\y)]\|\le\epsilon$.\footnote{To be consistent with our proposed definition, we choose to present the definition of $(\delta,\epsilon)$-Goldstein stationary point involving a random vector $\y$. This presentation is equivalent to the original definition \cite{goldstein1977optimization} used in \cite{zhang2020complexity}.} The best-possible rate for finding a $(\delta,\epsilon)$-Goldstein stationary point is $O(\delta^{-1}\epsilon^{-3})$ iterations. This rate was only recently achieved  by \citet{cutkosky2023optimal}, who developed an ``online-to-non-convex conversion'' (O2NC) technique that converts online convex optimization (OCO) algorithms to non-smooth non-convex stochastic optimization algorithms. Building on this background, we will relax the definition of stationarity and extend this O2NC technique to eventually develop a convergence analysis of SGDM in the non-smooth and non-convex setting.

\subsection{Our Contribution}

In this paper, we introduce a new notion of stationarity for non-smooth non-convex objectives. Our notion is a natural relaxation of the Goldstein stationary point, but will allow for  more flexible algorithm design. Intuitively, the problem with the Goldstein stationary point is that to verify that a point $\x$ is a stationary point, one must evaluate the gradient many times inside a ball of some small radius $\delta$ about $\x$. This means that algorithms finding such points usually make fairly conservative updates to sufficiently explore this ball: in essence, they work by verifying each iterate is \emph{not} close to a stationary point before moving on to the next iterate. Algorithms used in practice do not typically behave this way, and our relaxed definition will not require us to employ such behavior.


Using our new criterion, we propose a general framework, ``\alg'', that converts OCO algorithms to non-smooth optimization algorithms. This framework is an extension of the  O2NC technique of \citet{cutkosky2023optimal} that distinguishes itself through two significant improvements.

Firstly, the original O2NC method requires the OCO algorithm to constrain all of its iterates to a small ball of radius roughly $\delta\epsilon^2$. 
This approach is designed to ensure that the parameters within any period of $\epsilon^{-2}$ iterations remain inside a ball of radius $\delta$. The algorithm then uses these $\epsilon^{-2}$ gradient evaluations inside a ball of radius $\delta$ to check if the current iterate is a stationary point (i.e., if the average gradient has norm less than $\epsilon$). Our new criterion, however, obviates the need for such explicit constraints, intuitively allowing our algorithms to make larger updates when far from a stationary point.

Secondly, O2NC does not evaluate gradients at the actual iterates. Instead, gradients are evaluated at an intermediate variable $\w_n$ lying between the two iterates $\x_n$ and $\x_{n+1}$. This conflicts with essentially all practical  algorithms, and moreover imposes an extra memory burden. In contrast, our algorithm evaluates  gradients exactly at each iterate, which simplifies implementation and improves space complexity.

Armed  with this improved framework, we proposed an unconstrained variant of online gradient descent, which is derived from the family of online mirror descent with composite loss. When applied within this algorithm, our framework produces an algorithm that is exactly equal to stochastic gradient descent with momentum (SGDM), subject to an additional random scaling on the update. Notably, it also achieves the optimal rate under our new criterion.

To summarize, this paper has the following contributions:
\begin{itemize}
    \item We introduce a relaxed convergence criterion for non-smooth optimization that recovers all useful properties of Goldstein stationary point.

    \item We propose a modified online-to-non-convex conversion framework that does not require intermediate states.

    \item We apply our new conversion to the most standard OCO algorithm: ``online gradient descent''. The resulting method achieves optimal convergence guarantees as is almost exactly the same as the standard SGDM algorithm. The only difference is that the updates of  SGDM are now scaled by an exponential random variable. This is especially remarkable because the machinery that we employ does not particularly resemble SGDM until it is finally all put together.
    

\end{itemize}
\section{Preliminaries}
\label{sec:pre}

\paragraph{Notations}
Bold font $\x$ denotes a vector in $\RR^d$ and $\|\x\|$ denotes its Euclidean norm. We define $B_d(\x,r)=\{\y\in\RR^d:\|\x-\y\|\le r\}$ and sometimes drop the subscript $d$ when the context is clear. We use $[n]$ as an abbreviation for $\{1,2,\ldots,n\}$. We adopt the standard big-$O$ notation, and $f\lesssim g$ denotes $f=O(g)$. $\calP(S)$ denotes the set of all distributions over a measurable set $S$.

\paragraph{Stochastic Optimization}
Given a function $F:\RR^d\to\RR$, $F$ is $G$-Lipschitz if $|F(\x)-F(\y)| \le G\|\x-\y\|, \forall \x,\y$. Equivalently, when $F$ is differentiable, $F$ is $G$-Lipschitz if $\|\nabla F(\x)\|\le G, \forall \x$. $F$ is $H$-smooth if $F$ is differentiable and $\nabla F$ is $H$-Lipschitz; $F$ is $\rho$-second-order-smooth if $F$ is twice differentiable and $\nabla^2 F$ is $\rho$-Lipschitz.

\begin{assumption}
\label{as:optimization}
We assume that our objective function $F:\RR^d\to \RR$ is differentiable and $G$ Lipschitz, and given an initial point $\x_0$, $F(\x_0)-\inf F(\x) \le F^*$ for some known $F^*$. We also assume the stochastic gradient satisfies $\E[\nabla f(\x,z)\,|\,\x]=\nabla F(\x), \E\|\nabla F(\x)-\nabla f(\x,z)\|^2 \le \sigma^2$ for all $\x,z$. Finally, we assume that $F$ is \emph{well-behaved} in the sense of \cite{cutkosky2023optimal}: for any points $\x$ and $\y$, $F(\x)-F(\y) = \int_0^1 \langle \nabla F(\x+t(\y-\x)), \y-x\rangle \ dt$.
\end{assumption}

\paragraph{Online Learning}
An online convex optimization (OCO) algorithm is an iterative algorithm that uses the following procedure: in each iteration $n$, the algorithm plays an action $\Delta_n$ and then receives a convex  loss function $\ell_n$ The goal is to minimize the regret w.r.t. some comparator $\u$, defined as
\begin{equation*}
    \regret_n(\u) := \littlesum_{t=1}^n \ell_t(\Delta_t) - \ell_t(\u).
\end{equation*}
The most basic OCO algorithm is online gradient  descent: $\Delta_{n+1} = \Delta_n - \eta \nabla \ell_n(\Delta_n)$, which guarantees $\regret_N(\u) =O(\sqrt{N})$ for appropriate $\eta$. Notably, in OCO we make \emph{no assumptions} about the dynamics of $\ell_n$. They need not be stochastic, and could even be adversarially generated. We will be making use of algorithms that obtain \emph{anytime} regret bounds. That is, for all $n$ and any sequence of $\u_1,\u_2,\dots$, it is possible to bound $\regret_n(\u_n)$ by some appropriate quantities (that may be function of $n$). This is no great burden: almost all online convex optimization algorithms have anytime regret bounds.
For readers interested in more details, please refer to \cite{cesa2006prediction, hazan2019introduction, orabona2019modern}.



\subsection{Non-smooth Optimization}

Suppose $F$ is differentiable. $\x$ is an $\epsilon$-stationary point of $F$ if $\|\nabla F(\x)\|\le \epsilon$. This definition is the standard criterion for smooth non-convex optimization.
For non-smooth non-convex optimization, the standard criterion is the following: $\x$ is an $(\delta,\epsilon)$-Goldstein stationary point of $F$ if there exists $S\subset \RR^d$ and $P\in\calP(S)$ such that $\y\sim P$ satisfies $\E[\y]=\x$, $\|\y-\x\|\le\delta$ almost surely, and $\|\E[\nabla F(\y)]\|\le\epsilon$.\footnote{The original definition of $(\delta,\epsilon)$ Goldstein stationary point proposed by \cite{goldstein1977optimization, zhang2020complexity} does not require the condition $\E[\y]=\x$. However, as shown in \cite{cutkosky2023optimal}, this condition allows us to reduce a Goldstein stationary point to an $\epsilon$-stationary point when the loss is second-order smooth. Hence we also keep this condition.} Next, we formally define $(c,\epsilon)$-stationary point, our proposed new criterion for non-smooth optimization.

\begin{definition}
\label{def:stationary-point}
Suppose $F:\RR^d\to\RR$ is differentiable, $\x$ is a $(c,\epsilon)$-stationary point of $F$ if $\|\nabla F(\x)\|_c \le \epsilon$, where
\begin{equation*}
    \|\nabla F(\x)\|_c = \inf_{\substack{S\subset \RR^d \\ \y\sim P\in\calP(S) \\ \E[\y]=\x}} \| \E[\nabla F(\y)]\| + c \cdot \Ex\|\y-\x\|^2.
\end{equation*}
\end{definition}

In other words, if $\x$ is a $(c,\epsilon)$-stationary point, then there exists $S\subset \RR^d, P\in\calP(S)$ such that $\y\sim P$ satisfies $\E[\y]=\x$, $\E\|\y-\x\|^2 \le \epsilon/c$, and $\|\E[\nabla F(\y)]\|\le \epsilon$. To see how this definition is related to the previous $(\epsilon,\delta)$-Goldstein stationary point definition, consider the case when $c=\epsilon/\delta^2$. Then this new definition of $(c,\epsilon)$-stationary point is almost identical to $(\delta,\epsilon)$-Goldstein stationary point, except that it relaxes the constraint from $\|\y-\x\|\le\delta$ to $\E\|\y-\x\|^2\le\delta^2$. 


To further motivate this definition, we demonstrate that $(c,\epsilon)$-stationary point retains desirable properties of Goldstein stationary points. Firstly, the following result shows that, similar to Goldstein stationary points, $(c,\epsilon)$-stationary points can also be reduced to first-order stationary points with proper choices of $c$ when the objective is smooth or second-order smooth. 

\begin{restatable}{lemma}{CriterionReduction}
\label{lem:criterion-reduction}
Suppose $F$ is $H$-smooth. 
If $\|\nabla F(\x)\|_c\le\epsilon$ where $c=H^2\epsilon^{-1}$, then $\|\nabla F(\x)\|\le 2\epsilon$. \\
Suppose $F$ is $\rho$-second-order-smooth. 
If $\|\nabla F(\x)\|_c\le\epsilon$ where $c=\rho/2$, then $\|\nabla F(\x)\|\le 2\epsilon$.
\end{restatable}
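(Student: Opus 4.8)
The plan is to unfold Definition~\ref{def:stationary-point} and compare $\nabla F(\x)$ with the averaged gradient $\E[\nabla F(\y)]$ via the triangle inequality, controlling the resulting error term by whichever smoothness hypothesis is in force. Concretely, since $\|\nabla F(\x)\|_c \le \epsilon$ and the quantity defining $\|\cdot\|_c$ is an infimum, for every $\gamma > 0$ I can pick $S \subset \RR^d$ and $P \in \calP(S)$ with $\y \sim P$, $\E[\y] = \x$, and $\|\E[\nabla F(\y)]\| + c\,\E\|\y-\x\|^2 \le \epsilon + \gamma$; in particular $\|\E[\nabla F(\y)]\| \le \epsilon + \gamma$ and $\E\|\y-\x\|^2 \le (\epsilon+\gamma)/c$. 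Then, since $\E[\nabla F(\y)] - \nabla F(\x) = \E[\nabla F(\y) - \nabla F(\x)]$,
\begin{equation*}
\|\nabla F(\x)\| \;\le\; \|\E[\nabla F(\y)]\| + \|\E[\nabla F(\y) - \nabla F(\x)]\| \;\le\; \epsilon + \gamma + \|\E[\nabla F(\y) - \nabla F(\x)]\|,
\end{equation*}
so everything reduces to bounding the last term under each assumption.

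For the $H$-smooth case I would use Jensen's inequality together with $H$-Lipschitzness of $\nabla F$: $\|\E[\nabla F(\y)-\nabla F(\x)]\| \le \E\|\nabla F(\y)-\nabla F(\x)\| \le H\,\E\|\y-\x\| \le H\sqrt{\E\|\y-\x\|^2} \le H\sqrt{(\epsilon+\gamma)/c}$. Plugging in $c = H^2\epsilon^{-1}$ turns this into $\sqrt{\epsilon(\epsilon+\gamma)}$, hence $\|\nabla F(\x)\| \le \epsilon + \gamma + \sqrt{\epsilon(\epsilon+\gamma)}$, and sending $\gamma\to 0$ gives $\|\nabla F(\x)\| \le 2\epsilon$.

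For the $\rho$-second-order-smooth case I would instead invoke the standard second-order Taylor estimate $\|\nabla F(\y) - \nabla F(\x) - \nabla^2F(\x)(\y-\x)\| \le \tfrac{\rho}{2}\|\y-\x\|^2$. Taking expectations, the term $\nabla^2F(\x)\,\E[\y-\x]$ vanishes because $\E[\y] = \x$, leaving $\|\E[\nabla F(\y)-\nabla F(\x)]\| \le \tfrac{\rho}{2}\,\E\|\y-\x\|^2 = c\,\E\|\y-\x\|^2$ for $c = \rho/2$. Hence $\|\nabla F(\x)\| \le \|\E[\nabla F(\y)]\| + c\,\E\|\y-\x\|^2 \le \epsilon+\gamma$, and letting $\gamma\to 0$ yields $\|\nabla F(\x)\| \le \epsilon \le 2\epsilon$.

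There is no serious obstacle here; the computation is short. The one technical point to be careful about is that the infimum in the definition of $\|\cdot\|_c$ need not be attained, which is why I work with an arbitrary $\gamma > 0$ and take a limit at the end rather than with an exactly optimal distribution. The genuinely instructive step is the treatment of the error term: in the first-order case the passage from the second moment $\E\|\y-\x\|^2$ to the first moment $\E\|\y-\x\|$ via Jensen costs a square root, and it is precisely this that forces the scaling $c \asymp H^2/\epsilon$; in the second-order case the mean-zero property $\E[\y-\x]=0$ annihilates the first-order term, so second-order smoothness can be applied directly to $\E\|\y-\x\|^2$, giving the milder requirement $c \asymp \rho$.
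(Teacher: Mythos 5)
Your proof matches the paper's in every essential respect: the same triangle-inequality decomposition $\|\nabla F(\x)\|\le\|\E[\nabla F(\y)]\|+\|\E[\nabla F(\y)-\nabla F(\x)]\|$, Jensen plus $H$-Lipschitzness of $\nabla F$ in the smooth case (forcing $c\asymp H^2/\epsilon$ from the square root), and a second-order Taylor expansion in which $\nabla^2 F(\x)\,\E[\y-\x]=0$ kills the first-order term. The two small refinements you add are both fine: the $\gamma$-approximation correctly handles the possibility that the infimum in $\|\cdot\|_c$ is not attained (the paper implicitly assumes attainment), and by bounding the sum $\|\E[\nabla F(\y)]\|+c\,\E\|\y-\x\|^2$ directly in the second-order case you actually get the sharper conclusion $\|\nabla F(\x)\|\le\epsilon$, whereas the paper bounds the two pieces separately to reach $2\epsilon$.
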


As an immediate implication, suppose an algorithm achieves $O(c^{1/2}\epsilon^{-7/2})$ rate for finding a $(c,\epsilon)$-stationary point. Then Lemma \ref{lem:criterion-reduction} implies that, with $c=O(\epsilon^{-1})$, the algorithm automatically achieves the optimal rate of $O(\epsilon^{-4})$ for smooth objectives \cite{arjevani2019lower}. Similarly, with $c=O(1)$, it achieves the optimal rate of $O(\epsilon^{-7/2})$ for second-order smooth objectives \cite{arjevani2020second}. 

Secondly, we show in the following lemma that $(c,\epsilon)$-stationary points can also be reduced to Goldstein stationary points when the objective is Lipschitz.

\begin{restatable}{lemma}{GoldsteinReduction}
\label{lem:goldstein-reduction}
Suppose $F$ is $G$-Lipschitz. For any $c,\epsilon>0$, if $\x$ is a $(c,\epsilon)$-stationary point, then for any $\delta>0$, there exists $\x'$ such that $\|\x-\x'\|<\delta$ and $\x'$ is a $(2\delta,\epsilon')$-Goldstein stationary point where $\epsilon' = (1+\frac{2G}{c\delta^2})\epsilon$.
\end{restatable}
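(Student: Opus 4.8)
The plan is to build, from a (near-)optimal witness $\y$ for $\|\nabla F(\x)\|_c$, a Goldstein witness $\z$ by ``truncating'' $\y$ into the ball $B(\x,\delta)$ in a way that keeps the mean exactly equal to $\x$. First I would dispose of the trivial regime: if $\epsilon'\ge G$, the deterministic choice $\z=\x$ already works, since $\|\E[\nabla F(\z)]\|=\|\nabla F(\x)\|\le G\le\epsilon'$ because $F$ is $G$-Lipschitz. So I would assume $\epsilon'<G$, which, since $\epsilon'=(1+\tfrac{2G}{c\delta^2})\epsilon$, forces $\tfrac{\epsilon}{c\delta^2}<\tfrac12$. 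Then I would fix a distribution with $\y\sim P$, $\E[\y]=\x$, and $g+cv\le\epsilon$ (with arbitrarily small slack if the infimum is not attained), writing $g:=\|\E[\nabla F(\y)]\|$ and $v:=\E\|\y-\x\|^2\le\epsilon/c$ (we may assume $v>0$, else $\y=\x$ a.s.\ and $\x$ is already a Goldstein point); set $A:=\{\|\y-\x\|\le\delta\}$, so Markov's inequality gives $\Pr[A^c]\le v/\delta^2$.

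For the construction I would put $q:=v/\delta^2$ (note $0<q<1$ in this regime and $q\ge\Pr[A^c]$), pick an event $E\subseteq A$ with $\Pr[E]=1-q$ — e.g.\ intersect $A$ with an independent Bernoulli event of probability $\tfrac{1-q}{\Pr[A]}\in[0,1]$ — keep $\z=\y$ on $E$, and set $\z=\mathbf b$ on $E^c$, where $\mathbf b:=\E[\y\mid E^c]$ is a single point. Then $\E[\z]=\Pr[E]\,\E[\y\mid E]+\Pr[E^c]\,\mathbf b=\E[\y]=\x$ automatically; on $E$ we have $\|\z-\x\|=\|\y-\x\|\le\delta$; and $\mathbf b\in B(\x,\delta)$ by a single use of Cauchy--Schwarz, $\|\mathbf b-\x\|=\|\E[\y-\x\mid E^c]\|\le\sqrt{\E[\|\y-\x\|^2\mid E^c]}\le\sqrt{v/\Pr[E^c]}=\sqrt{v/q}=\delta$, so $\z$ is supported in $B(\x,\delta)$. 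For the gradient, writing $\E[\nabla F(\z)]=\E[\nabla F(\y)]-q\,\E[\nabla F(\y)\mid E^c]+q\,\nabla F(\mathbf b)$ and using $\|\nabla F\|\le G$ gives $\|\E[\nabla F(\z)]\|\le g+2Gq$; substituting $g\le\epsilon-cv$ and $q=v/\delta^2$ leaves $\|\E[\nabla F(\z)]\|\le\epsilon+v\bigl(\tfrac{2G}{\delta^2}-c\bigr)\le\epsilon+\tfrac{\epsilon}{c}\cdot\tfrac{2G}{\delta^2}=\epsilon'$, using $0\le v\le\epsilon/c$, which is the claim.

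The crux, I expect, is the mean-preservation. Since the $(c,\epsilon)$-condition bounds only $\E\|\y-\x\|^2$ and not $\|\y-\x\|$ pointwise, $\y$ may — rarely — land far outside $B(\x,\delta)$; conditioning on $A$ restores the support but displaces the barycenter, and that displacement cannot be repaired using the rare tail event $A^c$ alone, because a mass-$\Pr[A^c]$ distribution inside the ball moves the mean by at most $\delta\,\Pr[A^c]$, which is too little when the tail is improbable but very far away. The resolution is the deliberate ``over-sacrifice'': overwriting $\y$ on a slightly larger \emph{generic} event of mass $q=v/\delta^2$ pulls the conditional barycenter $\mathbf b$ back to distance exactly $\sqrt{v/q}=\delta$, just inside the ball, at the cost of only $2Gq=2Gv/\delta^2\le 2G\epsilon/(c\delta^2)$ in extra gradient norm — precisely the gap between $\epsilon$ and $\epsilon'$. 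The only remaining nuisance — whether the infimum defining $\|\nabla F(\x)\|_c$ is attained — is handled by running the argument with an $(\epsilon+\eta)$-witness; the resulting bound $\max\{\epsilon+\eta,\;2G(\epsilon+\eta)/(c\delta^2)\}$ is $\le\epsilon'$ once $\eta$ is small enough, so no limiting argument is needed.
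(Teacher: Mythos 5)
Your proof is correct, and while it follows the same high-level route as the paper — Chebyshev/Markov to bound the tail probability, clip $\y$ into $B(\x,\delta)$, and charge the gradient shift at $2G$ per unit of modified mass — it fixes a genuine gap in the paper's argument that you correctly diagnosed as the crux. The paper asserts, without construction, the existence of a clipped $\hat\y$ satisfying simultaneously $\hat\y=\y$ on $\{\|\y-\x\|<\delta\}$, $\|\hat\y-\x\|\le\delta$ almost surely, and $\E[\hat\y]=\x$. Such a $\hat\y$ need not exist: forcing $\hat\y=\y$ on the good set and $\E[\hat\y]=\x$ pins down $\E[\hat\y\mid\|\y-\x\|\ge\delta]=\E[\y\mid\|\y-\x\|\ge\delta]$, but that conditional barycenter can lie strictly outside $B(\x,\delta)$ (e.g.\ a mass $p$ at distance $M$ with $M\sqrt{p}$ of order $\delta$, balanced near $\x$: then $\E\|\y-\x\|^2\approx M^2p<\delta^2$ while $\E[\y\mid\text{tail}]$ is at distance $M\gg\delta$), so the ball-supported replacement cannot have the required mean. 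Your ``over-sacrifice'' — replacing $\y$ on an event of probability exactly $q=v/\delta^2\ge\Pr[\|\y-\x\|\ge\delta]$ rather than only on the tail — is precisely the needed repair: conditioning on a mass-$q$ event pulls the barycenter to distance at most $\sqrt{v/q}=\delta$, at cost $2Gq\le 2G\epsilon/(c\delta^2)$, which is the same budget the paper intended. The trivial split at $\epsilon'\ge G$, the Bernoulli-thinning construction of $E\subseteq A$, and the slack-$\eta$ treatment of the possibly-unattained infimum are all handled cleanly, so your version should be regarded as the correct form of this proof.
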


We defer the full proof of Lemma~\ref{lem:goldstein-reduction} to Appendix~\ref{app:pre}. Here is a brief proof sketch: given a distribution $\y$ that satisfies $(c,\epsilon)$-stationary, we clip $\y$ into $\mathcal{B}(\x,\delta)$ so that the mean of clipped distribution $\hat\y$ is a Goldstein stationary point. Note that our notion of Goldstein stationarity is slightly more stringent than the standard definition (e.g. see \cite{goldstein1977optimization,zhang2020complexity}) because we require $\E[\hat y]=x$, rather than simply requiring $\|\hat y-x\|\le \delta$ with probability 1. If we do not require $\E[\hat y]=\x$, then $\x$ would be itself Goldstein stationary. Moreover, if the original $\y$ that certifies $(c,\epsilon)$ stationarity has finite support, then we can compute the clipped mean $\E[\hat\y]$ in $O(|\mathrm{supp}(\y)|)$ time. In our setting of interest, $\y$ has support $\{\x_1,\ldots,\x_T\}$ where $\x_t$ is the optimizer parameter in each iterate, and $|\mathrm{supp}(\y)|=T$.

\subsection{Online-to-non-convex Conversion}

\begin{algorithm}[t]
\caption{O2NC \cite{cutkosky2023optimal}}
\label{alg:o2nc-original}
\begin{algorithmic}[1]
    \STATE \textbf{Input:} OCO algorithm $\calA$, initial state $\x_0$, parameters $N,K,T\in\NN$ such that $N=KT$.
    \FOR {$n\gets 1,2,\ldots,N$}
        \STATE Receive $\Delta_n$ from $\calA$.
        \STATE Update $\x_n \gets \x_{n-1} + \Delta_n$ and $\w_n\gets\x_{n-1}+s_n\Delta_n$, where $s_n\sim\Unif([0,1])$ i.i.d.
        \STATE Compute $\g_n \gets \nabla f(\x_n, z_n)$.
        \STATE Send loss $\ell_n(\Delta) = \langle \g_n,\Delta\rangle$ to $\calA$.
        
        // For output only (update every $T$ iteration):
        \STATE If $n=kT$, compute $\overline\w_k = \frac{1}{T}\sum_{t=0}^{T-1} \w_{n-t}$.
    \ENDFOR
    \STATE Output $\overline \w \sim \Unif(\{\overline\w_k:k\in[K]\})$.
\end{algorithmic}
\end{algorithm}

Since our algorithm is an extension of the online-to-non-convex conversion (O2NC) technique proposed by \cite{cutkosky2023optimal}, we briefly review the original O2NC algorithm. The pseudocode is outlined in Algorithm \ref{alg:o2nc-original}, with minor adjustments in notations for consistency with our presentation. 

At its essence, O2NC shifts the challenge of optimizing a non-convex and non-smooth objective into minimizing regret. The intuition is as follows. By adding a uniform perturbation $s_n\in[0,1]$, $\langle \nabla f(\x_{n-1}+s_n\Delta_n,z_n), \Delta_n\rangle = \langle \g_n,\Delta_n\rangle$ is an unbiased estimator of $F(\x_n)-F(\x_{n-1})$, effectively capturing the ``training progress''. Consequently, by minimizing the regret, which is equivalent to minimizing $\sum_{n=1}^N\langle \g_n,\Delta_n\rangle$, the algorithm automatically identifies the most effective update step $\Delta_n$.

\subsection{Paper Organization}

In Section \ref{sec:alg}, we present the general online-to-non-convex framework, \alg. We  first explain the intuitions behind the algorithm design, and then we provide the convergence analysis in Section \ref{sec:alg-conv}. In Section \ref{sec:sgdm}, we provide an explicit instantiation of our framework, and see that the resulting algorithm is essentially the standard SGDM.
In Section \ref{sec:lower}, we present a lower bound for finding $(c,\epsilon)$-stationary point.
In Section \ref{sec:experiment}, we present empirical evaluations.
\section{Exponentiated Online-to-non-convex}
\label{sec:alg}

\begin{algorithm}[t]
\caption{Exponentiated O2NC}
\label{alg:O2NC-exp-avg}
\begin{algorithmic}[1]
    \STATE \textbf{Input:} OCO algorithm $\calA$, initial state $\x_0$, parameters $N\in\NN, \beta\in (0,1)$, regularizers $\calR_n(\Delta)$.
    \FOR {$n\gets 1,2,\ldots,N$}
        \STATE Receive $\Delta_n$ from $\calA$.
        \STATE Update $\x_n \gets \x_{n-1} + s_n\Delta_n$, where $s_n\sim\Expo(1)$ i.i.d.
        \STATE Compute $\g_n \gets \nabla f(\x_n, z_n)$.
        \STATE Send loss $\ell_n(\Delta) = \langle \beta^{-n}\g_n, \Delta\rangle + \calR_n(\Delta)$ to $\calA$.
        
        // For output only (does \textit{not} affect training):
        \STATE Update $\overline\x_n = \frac{\beta-\beta^n}{1-\beta^n}\overline\x_{n-1} + \frac{1-\beta}{1-\beta^n}\x_n$. \\
        Equivalently, $\overline \x_n = \sum_{t=1}^n \beta^{n-t}\x_t\cdot \frac{1-\beta}{1-\beta^n}$.
    \ENDFOR
    \STATE Output $\overline \x \sim \Unif(\{\overline\x_n:n\in[N]\})$.
\end{algorithmic}
\end{algorithm}



In this section, we present our improved online-to-non-convex framework, \alg, and explain the key techniques we employed to improve the algorithm. The pseudocode is presented in Algorithm \ref{alg:O2NC-exp-avg}.

\paragraph{Random Scaling}

One notable feature of Algorithm \ref{alg:O2NC-exp-avg} is that the update $\Delta_n$ is scaled by an exponential random variable $s_n$. 
Formally, we have the following result:

\begin{restatable}{lemma}{ExponentialScaling}
\label{lem:exp-scaling}
Let $s\sim\mathrm{Exp}(\lambda)$ for some $\lambda>0$, then
\begin{equation*}
    \Ex_s[F(\x+s\Delta) - F(\x)] = \Ex_s[\langle \nabla F(\x+s\Delta), \Delta\rangle] / \lambda.
\end{equation*}
\end{restatable}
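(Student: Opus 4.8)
The plan is to reduce the claim to a one-dimensional computation by restricting $F$ to the ray emanating from $\x$ in the direction $\Delta$. Define $g:[0,\infty)\to\RR$ by $g(s)=F(\x+s\Delta)$. Applying the ``well-behaved'' property of $F$ from Assumption \ref{as:optimization} to the endpoints $\x$ and $\x+s\Delta$ (and substituting $u=ts$) gives $g(s)-g(0)=\int_0^s\langle\nabla F(\x+u\Delta),\Delta\rangle\,du$, so that $g$ is absolutely continuous with $g'(u)=\langle\nabla F(\x+u\Delta),\Delta\rangle$ for almost every $u$. Writing $p(s)=\lambda e^{-\lambda s}$ for the density of $\mathrm{Exp}(\lambda)$, the target identity is precisely $\int_0^\infty\big(g(s)-g(0)\big)p(s)\,ds=\tfrac1\lambda\int_0^\infty g'(s)p(s)\,ds$.

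First I would record the mild integrability facts needed to manipulate these integrals. Since $F$ is $G$-Lipschitz (Assumption \ref{as:optimization}), $|g'(u)|\le G\|\Delta\|$ for all $u$, hence every integral below is absolutely convergent, being dominated by $G\|\Delta\|\int_0^\infty e^{-\lambda u}\,du=G\|\Delta\|/\lambda$; moreover $g$ grows at most linearly, so $g(s)e^{-\lambda s}\to 0$ as $s\to\infty$.

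The cleanest route is then integration by parts on the left-hand side. With $u=g(s)$ and $dv=\lambda e^{-\lambda s}\,ds$ (so $v=-e^{-\lambda s}$),
\[
\Ex_s[F(\x+s\Delta)]=\int_0^\infty g(s)\,\lambda e^{-\lambda s}\,ds=\big[-g(s)e^{-\lambda s}\big]_0^\infty+\int_0^\infty g'(s)e^{-\lambda s}\,ds=g(0)+\frac1\lambda\Ex_s[g'(s)],
\]
where the boundary term equals $g(0)=F(\x)$ by the decay above. Substituting $g'(s)=\langle\nabla F(\x+s\Delta),\Delta\rangle$ and rearranging yields the lemma. Equivalently, one can expand $g(s)-g(0)=\int_0^s g'(t)\,dt$ inside the expectation and swap the order of integration by Fubini over the region $\{0\le t\le s<\infty\}$, using $\int_t^\infty\lambda e^{-\lambda s}\,ds=e^{-\lambda t}$ to arrive at the same identity $\int_0^\infty g'(t)e^{-\lambda t}\,dt$ on both sides.

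I do not anticipate a genuine obstacle here: the only points requiring a word of justification are the vanishing of the boundary term (equivalently, the applicability of Fubini), both of which follow immediately from the $G$-Lipschitz bound in Assumption \ref{as:optimization}. The conceptual content is simply the exponential-distribution identity that ``averaging the increment'' and ``averaging the directional derivative, then dividing by the rate $\lambda$'' coincide.
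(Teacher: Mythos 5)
Your proof is correct and, at its core, is the same computation as the paper's: the paper uses the well-behaved (FTC) expansion followed by a Fubini change of integration order together with the tail identity $\int_t^\infty \lambda e^{-\lambda s}\,ds = e^{-\lambda t}$, which is exactly the alternative route you sketch at the end. Your primary presentation via integration by parts is an equivalent repackaging of that same argument, with the boundary-term vanishing playing the role of the Fubini justification.
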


In Algorihtm~\ref{alg:O2NC-exp-avg}, we set $s_n\sim\mathrm{Exp}(1)$ and then define $\x_n = \x_{n-1} + s_n\Delta_n$. Thus, Lemma \ref{lem:exp-scaling} implies that 
\begin{align*}
    \E[F(\x_n) - F(\x_{n-1})] 
    &= \E\langle \nabla F(\x_n), \Delta_n\rangle \\
    &= \E\langle \nabla F(\x_n), \x_n-\x_{n-1}\rangle.
\end{align*}
In other words, we can estimate the ``training progress'' $F(\x_n)-F(\x_{n-1})$ by directly computing the stochastic gradient at iterate $\x_n$. By exploiting favorable properties of the exponential distribution, we dispense with the need for the ``auxiliary point'' $\w_n $ employed by O2NC.

We'd like to highlight the significance of this result. The vast majority of smooth non-convex optimization analysis depends on the assumption that $F(\x)$ is locally linear, namely $F(\x_n)-F(\x_{n-1}) \approx \langle \nabla F(\x_n), \x_n-\x_{n-1}\rangle$. Under various smoothness assumptions, the error in this approximation can be controlled via bounds on the remainder in a Taylor series. For example, when $F$ is smooth, then $F(\x_n) - F(\x_{n-1}) = \langle F(\x_n),\x_n-\x_{n-1}\rangle + O(\|\x_n-\x_{n-1}\|^2)$. However, since smoothness is necessary for bounding Taylor approximation error, such analysis technique is inapplicable in non-smooth optimization. In contrast, by scaling an exponential random variable to the update, we directly establish a linear equation that $\E[F(\x_n)-F(\x_{n-1})]=\E\langle \nabla F(\x_n),\x_n-\x_{n-1}\rangle$, effectively eliminating any additional error that Taylor approximation might incur. 

A randomized approach such as ours is also recommended in the recent findings by \citet{jordan2023deterministic}, who demonstrated that randomization is \emph{necessary} for achieving a dimension-free rate in non-smooth optimization. In particular, any deterministic algorithm suffers an additional dimension dependence of $\Omega(d)$. 

Although employing exponential random scaling might seem unconventional, we justify this scaling by noting that $s_n\sim\Expo(1)$ satisfies $\E[s_n]=1$ and $\P\{s_n \ge t\} = \exp(-t)$ (in particular, $\P\{s_n \le 5\} \ge 0.99$). In other words, with high probability, the scaling factor behaves like a constant scaling on the update. To corroborate the efficacy of random scaling, we have conducted a series of empirical tests, the details of which are discussed in Section \ref{sec:experiment}.

\paragraph{Exponentiated and Regularized Losses}
The most significant feature of Exponentiated O2NC (and from which it derives its name) is the loss function: $\ell_n(\Delta) = \langle \beta^{-n}\g_n, \Delta\rangle + \calR_n(\Delta)$. This loss consists of two parts: intuitively, the exponentially upweighted linear loss $\langle \beta^{-n}\g_n,\Delta\rangle$ measures the ``training progress'' $F(\x_n)-F(\x_{n-1})$ (as discussed in Lemma \ref{lem:exp-scaling}), and $\calR_n(\Delta)$ serves as an stabilizer that prevents the iterates from irregular behaviors. We will elaborate the role of each component later.
To illustrate how Exponential O2NC works, let $\u_n$ be the optimal choice of $\Delta_n$ in hindsight. Then by minimizing the regret $\regret_n(\u_n)$ w.r.t. $\u_n$, Algorithm \ref{alg:O2NC-exp-avg} automatically chooses the best possible update $\Delta_n$ that is closest to $\u_n$.

\paragraph{Exponentially Weighted Gradients}
For now, set aside the regularizer $\calR_n$ and focus on the linear term $\langle\beta^{-n}\g_n,\Delta\rangle$. To provide an intuition why we upweight the gradient by an exponential factor $\beta^{-n}$, we provide a brief overview for the convergence analysis of our algorithm. For simplicity of illustration, we assume $\g_n = \nabla F(\x_n)$ and $\calR_n=0$.

Let $S_n=\{\x_t\}_{t=1}^n$ and let $\y_n$ be distributed over $S_n$ such that $\P\{\y_n=\x_t\} = p_{n,t} := \beta^{n-t}\cdot \frac{1-\beta}{1-\beta^n}$. Our strategy will be to show that this set $S_n$ and random variable $\y_n$ satisfy the conditions to make $\overline \x_n$ a $(c,\epsilon)$ stationary point  where $\overline\x_n$ is defined in Algorithm \ref{alg:O2NC-exp-avg}. To start, note that this distribution satisfies $\overline\x_n=\E[\y_n]$. Next, since there is always non-zero probability that $\y_n=\x_1$, it's not possible to obtain a deterministic bound of $\|\y_n-\overline\x_n\|\le \delta$ for some small $\delta$ (as would be required if we were trying to establish $(\delta,\epsilon)$ Goldstein stationarity). However, since $\y_n$ is exponentially more likely to be a later iterate (close to $\x_n$) than an early iterate (far from $\x_n$), intuitively $\E\|\y_n-\overline\x_n\|^2$ should not be too big. Formalizing this intuition forms a substantial part of our analysis.


In the convergence analysis, we will show $\overline\x$ is a $(c,\epsilon)$-stationary point by bounding $\|\nabla F(\overline\x_n)\|_c$ (defined in Definition \ref{def:stationary-point}) for all $n$. The condition $\E[\y_n]=\overline\x_n$ is already satisfied by construction of $\y_n$, and it remains to bound the expected gradient $\|\E[\nabla F(\y_n)]\|$ and the variance $\E\|\y_n-\overline\x_n\|^2$. While the regularizer $\calR_n$ is imposed to control the variance, the exponentiated gradients is employed to bound the expected gradient. 
In particular, this is achieved by reducing the difficult task of minimizing the expected gradient of a non-smooth non-convex objective to a relatively easier (and very heavily studied) one: minimizing the regret w.r.t. exponentiated losses $\ell_t(\Delta) = \langle \beta^{-t}\g_t,\Delta\rangle$.
To elaborate further, let's consider a simplified illustration as follows.
Recall that $p_{n,t} = \beta^{n-t}\cdot \frac{1-\beta}{1-\beta^n}$. By construction of $\y_n$,
\begin{align*}
    \E[\nabla F(\y_n)]
    &= \littlesum_{t=1}^n p_{n,t} \nabla F(\x_t).
\end{align*}
Next, for each $n\in[N]$, we define
\begin{align}
    \u_n = -D\frac{\sum_{t=1}^n p_{n,t} \nabla F(\x_t)}{\|\sum_{t=1}^n p_{n,t}\nabla F(\x_t)\|}
    \label{eq:eo2nc-u}
\end{align}
for some $D$ to be specified later. As a remark, $\u_n$ minimizes $\langle \E[\nabla F(\y_n)], \Delta\rangle$ over all possible $\Delta$ such that $\|\Delta\|=D$, therefore representing the optimal update in iterate $n$ that leads to the fastest convergence.

With $\u_n$ defined in \eqref{eq:eo2nc-u}, it follows that
\begin{align*}
    \frac{1}{D}\sum_{t=1}^n p_{n,t} \langle \nabla F(\x_t), -\u_n\rangle 
    &= \left\| \sum_{t=1}^n p_{n,t} \nabla F(\x_t) \right\| \\
    &= \|\E [\nabla F(\y_n)]\|.
\end{align*}
Recall that we assume $\g_t=\nabla F(\x_t)$ for simplicity. Moreover, in later convergence analysis, we will carefully show that $\sum_{n=1}^N\sum_{t=1}^n p_{n,t} \langle\nabla F(\x_t),-\Delta_t\rangle \lesssim 1-\beta$ (see Appendix \ref{app:alg}). Consequently,
\begin{align*}
    &\frac{1}{N} \sum_{n=1}^N \|\E\nabla F(\y_n)\| \\
    &= \frac{1}{DN} \sum_{n=1}^N \sum_{t=1}^n p_{n,t} \langle \nabla F(\x_t), \Delta_t-\u_n\rangle \\
    &\quad - \frac{1}{DN} \sum_{n=1}^N \sum_{t=1}^n p_{n,t}\langle \nabla F(\x_t),\Delta_t\rangle \\
    &\lesssim \frac{1-\beta}{DN} \left(1+\sum_{n=1}^N \beta^n 
    \regret_n(\u_n)\right).
\end{align*}
Here $\regret_n(\u_n) = \sum_{t=1}^n \langle \beta^{-t}\g_t,\Delta_t-\u_n\rangle$ denotes the regret w.r.t. the exponentiated losses $\ell_t(\Delta) = \langle \beta^{-t}\g_t,\Delta\rangle$ for $t=1,\ldots,n$ (assuming $\calR_n=0$) and comparator $\u_n$ defined in \eqref{eq:eo2nc-u}. Notably, the expected gradient is now bounded by the weighted average of the sequence of static regrets, $\regret_n(\u_n)$. Consequently, a good OCO algorithm that effectively minimizes the regret is closely aligned with our goal of minimizing the expected gradient.

\paragraph{Variance Regularization}
As aforementioned, we impose the regularizer $\calR_n(\Delta) = \frac{\mu_n}{2}\|\Delta\|^2$ to control the variance $\E\|\y_n-\overline\x_n\|^2$. Formally, the following result establishes a reduction from bounding variance to bounding the norm of $\Delta_t$, thus motivating the choice of the regularizer.

\begin{restatable}{lemma}{VarianceToRegularizer}
\label{lem:variance-regularizer}
For any $\beta\in(0,1)$, 
\begin{align*}
    \Ex_s\sum_{n=1}^N \Ex_{\y_n} \|\y_n-\overline\x_n\|^2 \le \sum_{n=1}^N \frac{12}{(1-\beta)^2}\|\Delta_n\|^2.
\end{align*}
\end{restatable}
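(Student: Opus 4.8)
The plan is to reduce everything to a single scalar identity and then apply Cauchy–Schwarz. Recall that $\y_n$ is supported on $\{\x_1,\dots,\x_n\}$ with $\P\{\y_n=\x_t\}=p_{n,t}=\beta^{n-t}\tfrac{1-\beta}{1-\beta^n}$, and $\overline\x_n=\E[\y_n]=\sum_{t=1}^n p_{n,t}\x_t$. The first step is to rewrite $\x_t=\x_0+\sum_{i=1}^t s_i\Delta_i$ and subtract, so that $\y_n-\overline\x_n$ becomes a linear combination of the increments $s_i\Delta_i$ with coefficients that depend only on $\beta$, $n$, $t$, and $i$. Concretely, if $\y_n=\x_t$ then $\y_n-\overline\x_n=\sum_{i=1}^n a_{n,t,i}\, s_i\Delta_i$ where $a_{n,t,i}=\mathbbm 1[i\le t]-\sum_{\tau\ge i}p_{n,\tau}$; the key point is that $\sum_{i}|a_{n,t,i}|$ stays bounded because the tail sums $\sum_{\tau\ge i}p_{n,\tau}$ decay geometrically in the ``distance'' from $i$ to the effective center of the distribution.

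The second step is to pass to expectation over $\y_n$ and over $s$. Taking $\E_{\y_n}$ first gives $\E_{\y_n}\|\y_n-\overline\x_n\|^2=\sum_t p_{n,t}\|\sum_i a_{n,t,i}s_i\Delta_i\|^2$. Applying Cauchy–Schwarz / Jensen on the inner sum, weighting by $|a_{n,t,i}|$, bounds this by $(\sum_i |a_{n,t,i}|)\cdot\sum_i |a_{n,t,i}|\,s_i^2\|\Delta_i\|^2$. I would then bound the total-variation-type factor $\sum_i|a_{n,t,i}|$ by an absolute constant times $\tfrac{1}{1-\beta}$ (this is the geometric-series estimate), and similarly control $\sum_t p_{n,t}|a_{n,t,i}|$ after summing over $n$. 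Taking $\E_s$ replaces $s_i^2$ by $\E[s_i^2]=2$ (since $s_i\sim\Expo(1)$), which only contributes another constant.

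The third and final step is to sum over $n=1,\dots,N$ and swap the order of summation to collect, for each fixed $i$, the coefficient multiplying $\|\Delta_i\|^2$. The claim is that $\sum_{n\ge i}\sum_{t} p_{n,t}\,(\sum_j|a_{n,t,j}|)\,|a_{n,t,i}|\,\E[s_i^2]$ is at most $\tfrac{12}{(1-\beta)^2}$; the two factors of $\tfrac{1}{1-\beta}$ come from one geometric sum over the spread of the averaging window and one from summing the exponential tails over $n$, and the numerical constant $12$ is whatever falls out of being slightly wasteful in the Cauchy–Schwarz and geometric-series steps. I expect the main obstacle to be exactly this bookkeeping: choosing how to split the double sum so that each geometric series is clean, and verifying that the constant really is as small as $12$ rather than something $\beta$-dependent. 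A cleaner alternative I would try in parallel is to avoid the explicit coefficients $a_{n,t,i}$ altogether: write $\|\y_n-\overline\x_n\|^2\le 2\|\y_n-\x_n\|^2+2\|\x_n-\overline\x_n\|^2$, bound $\|\x_n-\x_t\|\le\sum_{j=t+1}^n s_j\|\Delta_j\|$, use that $\y_n$ and $\overline\x_n$ are both concentrated within an $O(1/(1-\beta))$-length window of the most recent iterates, and then apply Young's inequality plus $\sum_m \beta^m\le 1/(1-\beta)$ twice. Either route reduces the lemma to elementary geometric-series manipulations once the structural decomposition is in place.
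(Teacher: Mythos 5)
Your primary route has a genuine gap at the step where you claim to ``bound the total-variation-type factor $\sum_i|a_{n,t,i}|$ by an absolute constant times $\tfrac{1}{1-\beta}$.'' With $a_{n,t,i}=\mathbbm 1[i\le t]-q_{n,i}$ where $q_{n,i}=\sum_{\tau\ge i}p_{n,\tau}=\tfrac{1-\beta^{\,n-i+1}}{1-\beta^n}$, the second block of the sum is
\begin{align*}
\sum_{i=t+1}^{n}|a_{n,t,i}| \;=\; \sum_{i=t+1}^{n}\frac{1-\beta^{\,n-i+1}}{1-\beta^{n}}\,,
\end{align*}
and for $t$ far from $n$ (say $t=1$, $n$ large) almost every summand is close to $1/(1-\beta^n)\approx 1$, so the whole sum is of order $n-t$, not $O\bigl(\tfrac{1}{1-\beta}\bigr)$. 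What saves the lemma in the end is that $p_{n,t}$ decays like $\beta^{n-t}$ and kills the polynomial blowup, but that compensation is precisely what your weighted Cauchy--Schwarz step discards: once you pull $\sum_i|a_{n,t,i}|$ out front and bound it uniformly, the $p_{n,t}$ factor is no longer available where it is needed, and the resulting bound is off by a factor of $\tfrac{1}{1-\beta}$. So as written, the plan does not go through.

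Your fallback plan --- anchor at $\x_n$ via $\|\y_n-\overline\x_n\|^2\le 2\|\y_n-\x_n\|^2+2\|\x_n-\overline\x_n\|^2$ (or, even cleaner, $\E\|\y_n-\overline\x_n\|^2\le\E\|\y_n-\x_n\|^2$ by the variance minimization property of the mean), then $\|\x_n-\x_t\|^2\le (n-t)\sum_{j>t}s_j^2\|\Delta_j\|^2$ --- is the right idea and essentially correct. The paper takes a similar but symmetric route: it expands $\overline\x_n=\sum_{i'}p_{n,i'}\x_{i'}$ and applies Jensen to get $\E_{\y_n}\|\y_n-\overline\x_n\|^2\le 2\sum_{i>i'}p_{n,i}p_{n,i'}\|\x_i-\x_{i'}\|^2$, keeping both endpoints weighted by $p_{n,\cdot}$. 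Either decomposition leads to a coefficient $\lambda_{n,t}$ in front of $\|\Delta_t\|^2$ that must then be summed over $n\ge t$. This second step is where the real work is. The paper derives a closed form $\lambda_{n,t}\le 4\tfrac{(n-t+1)\beta^{n-t+1}}{1-\beta^n}$ (by differentiating geometric series) and splits the sum over $n$ at the threshold $K=\lceil\tfrac{1}{1-\beta}\rceil$, using $\tfrac{1}{1-\beta^n}\le 2$ for $n\ge K$ and counting terms directly for $n<K$; this is how the constant $12$ appears. Your sketch is much too casual here: naively bounding the geometric tails (e.g., treating $\tfrac{1-\beta}{1-\beta^n}$ as $O(1)$ uniformly) overshoots to $\tfrac{1}{(1-\beta)^3}$; to get $\tfrac{1}{(1-\beta)^2}$ you need to exploit that $\tfrac{1-\beta}{1-\beta^n}=O(1/n)$ for small $n$ and $=O(1-\beta)$ for large $n$, which is exactly the case split the paper performs. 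So: abandon the $a_{n,t,i}$ route, develop the anchored route, and be careful about the two regimes of $n$ relative to $\tfrac{1}{1-\beta}$.
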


This suggests that bounding $\|\Delta_n\|^2$ is sufficient to bound the variance of $\y_n$. Therefore, we impose the regularizer $\calR_n(\Delta) = \frac{\mu_n}{2}\|\Delta\|^2$, for some constant $\mu_n$ to be determined later, to ensure that $\|\Delta_n\|^2$ remains small, effectively controlling the variance of $\y_n$. 

Furthermore, we'd like to highlight that Lemma \ref{lem:variance-regularizer} provides a strictly better bound on the variance of $\y_n$ compared to the possible maximum deviation $\max\|\y_n-\overline\x_n\|$. For illustration, assume $\Delta_t$'s are orthonormal, then $\max\|\y_N-\overline\x_N\| \approx \|\x_1-\x_N\| = O(N)$. On the other hand, Lemma \ref{lem:variance-regularizer} implies that for $n\sim\Unif([N])$, $\E_n[\Var(\y_n)] = O(\frac{1}{(1-\beta)^2})$. In particular, we will show that $1-\beta= N^{-1/2}$ when the objective is smooth. Consequently, $\E\|\y_n-\overline\x_n\| = O(\sqrt{N})$, which is strictly tighter than the deterministic bound of $\max\|\y_N-\overline\x_N\| = O(N)$.

This further motivates why we choose this specific distribution for $\y_n$: the algorithm does not need to be conservative all the time and can occasionally make relatively large step, breaking the deterministic constraint that $\|\y_n-\overline\x_n\|\le\delta$, while still satisfying $\Var(\y_n)\le\delta^2$.


\subsection{Convergence Analysis}
\label{sec:alg-conv}

Now we present the main convergence theorem of Algorithm \ref{alg:O2NC-exp-avg}. This is a very general theorem, and we will prove the convergence bound of a more specific algorithm (Theorem \ref{thm:sgdm}) based on this result. A more formally stated version of this theorem and its proof can be found in Appendix \ref{app:alg}.  

\begin{theorem}
\label{thm:o2nc}
Follow Assumption \ref{as:optimization}. Let $\regret_n(\u_n)$ denote the regret w.r.t. $\ell_t(\Delta)=\langle\beta^{-t}\g_t,\Delta\rangle + \calR_t(\Delta)$ for $t=1,\ldots,n$ and comparator $\u_n$ defined in \eqref{eq:eo2nc-u}. Define $\calR_t(\Delta) = \frac{\mu_t}{2}\|\Delta\|^2$, $\mu_t = \frac{24cD}{\alpha^2}\beta^{-t}$, and $\alpha=1-\beta$, then
\begin{align*}
    &\E \|\nabla F(\overline\x)\|_c
    \lesssim \frac{F^*}{DN} + \frac{G+\sigma}{\alpha N} + \sigma\sqrt{\alpha} + \frac{c D^2}{\alpha^2} \\
    & + \E\frac{\beta^{N+1}\regret_N(\u_N) + \alpha \sum_{n=1}^N \beta^n \regret_n(\u_n)}{DN}.
\end{align*}
\end{theorem}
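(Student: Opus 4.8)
The plan is to assemble the bound from three ingredients already telegraphed in Section~\ref{sec:alg}: (i) the exponential-scaling identity of Lemma~\ref{lem:exp-scaling}, which turns the telescoping sum $\sum_n (F(\x_n)-F(\x_{n-1}))$ into $\sum_n \E\langle \nabla F(\x_n),\Delta_n\rangle$; (ii) the variance-to-regularizer reduction of Lemma~\ref{lem:variance-regularizer}, which controls $\sum_n \E\|\y_n-\overline\x_n\|^2$ by $\tfrac{12}{\alpha^2}\sum_n\|\Delta_n\|^2$; and (iii) the regret decomposition sketched in the ``Exponentially Weighted Gradients'' paragraph. First I would fix $n$, write $\E[\nabla F(\y_n)] = \sum_{t=1}^n p_{n,t}\nabla F(\x_t)$ and, using the definition of $\u_n$ in \eqref{eq:eo2nc-u}, express $\|\E[\nabla F(\y_n)]\| = \tfrac1D\sum_{t=1}^n p_{n,t}\langle \nabla F(\x_t), \Delta_t - \u_n\rangle - \tfrac1D\sum_{t=1}^n p_{n,t}\langle \nabla F(\x_t),\Delta_t\rangle$. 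The first sum is $\tfrac{\alpha}{D(1-\beta^n)}\beta^n\,\widetilde\regret_n(\u_n)$ where $\widetilde\regret_n$ is the regret against the \emph{true-gradient} exponentiated losses $\langle\beta^{-t}\nabla F(\x_t),\Delta\rangle + \calR_t(\Delta)$; I would then pass from $\nabla F(\x_t)$ to the stochastic $\g_t$, absorbing the difference into an error term controlled by $\sigma$ after taking expectations (here the martingale structure $\E[\g_t\mid\x_t]=\nabla F(\x_t)$ and the bound $\|\u_n\|=D$ are what make the cross terms vanish or stay $O(\sigma)$-small), and also add back the regularizer contribution $\sum_t \calR_t(\Delta_t)-\calR_t(\u_n)$, which costs at most $\tfrac{\mu_n}{2}(\|\Delta_t\|^2$-terms$)$ on one side and is nonnegative on the other.

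Next I would handle the second sum, $-\tfrac1{DN}\sum_{n=1}^N\sum_{t=1}^n p_{n,t}\langle\nabla F(\x_t),\Delta_t\rangle$. Swapping the order of summation and using $\Delta_t = \x_t-\x_{t-1}$ together with Lemma~\ref{lem:exp-scaling} (which gives $\E\langle\nabla F(\x_t),\Delta_t\rangle = \E[F(\x_t)-F(\x_{t-1})]$), this becomes a weighted telescoping sum of function-value decrements; summing $\sum_{n\ge t}p_{n,t}$ and telescoping yields something of order $\tfrac{\alpha}{D N}\sum_t \beta^{-t}\cdot(\text{decrement}_t)$ — I would use Abel summation to re-expose $F(\x_0)-\inf F \le F^*$ and collect the leftover $\tfrac{F^*}{DN}$ and $\tfrac{G+\sigma}{\alpha N}$ boundary terms (the $G$ enters through $\|\nabla F\|\le G$ bounding the endpoint gradient terms, the extra $\tfrac{1}{\alpha}$ from the factor $\beta^{-N}$ versus the $\tfrac{1-\beta}{1-\beta^n}$ weights). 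Combining with step one gives
\begin{align*}
\frac1N\sum_{n=1}^N\|\E[\nabla F(\y_n)]\| \lesssim \frac{F^*}{DN} + \frac{G+\sigma}{\alpha N} + \sigma\sqrt\alpha + \E\frac{\beta^{N+1}\regret_N(\u_N)+\alpha\sum_n\beta^n\regret_n(\u_n)}{DN},
\end{align*}
where the $\sigma\sqrt\alpha$ term is the residual variance of the stochastic-gradient substitution after the exponential weights $\beta^{-t}$ are renormalized by $\alpha$. Finally, for the variance half of $\|\nabla F(\overline\x)\|_c$, I would invoke Lemma~\ref{lem:variance-regularizer} to get $\tfrac{c}{N}\sum_n\E\|\y_n-\overline\x_n\|^2 \le \tfrac{12c}{\alpha^2 N}\sum_n\|\Delta_n\|^2$; the regularizer $\calR_t(\Delta)=\tfrac{\mu_t}{2}\|\Delta\|^2$ with $\mu_t = \tfrac{24cD}{\alpha^2}\beta^{-t}$ was chosen precisely so that this $\tfrac{c}{\alpha^2}\|\Delta_t\|^2$ is dominated by $\tfrac{1}{D}(\calR_t(\Delta_t)-\calR_t(\u_t))$ up to the additive $\calR_t(\u_t)=\tfrac{\mu_t}{2}D^2 = \tfrac{12cD^3}{\alpha^2}\beta^{-t}$, whose renormalized sum is the $\tfrac{cD^2}{\alpha^2}$ term; so this variance cost folds into the same regret expression already appearing above. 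Averaging over $n\sim\Unif([N])$ and using $\|\nabla F(\overline\x_n)\|_c \le \|\E[\nabla F(\y_n)]\| + c\,\E\|\y_n-\overline\x_n\|^2$ for the specific choice of $S_n$, $\y_n$ gives exactly the claimed bound.

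The main obstacle, and the step I expect to require the most care, is the stochastic-to-true-gradient passage combined with the regret bookkeeping: the comparator $\u_n$ in \eqref{eq:eo2nc-u} is itself a function of $\nabla F(\x_1),\dots,\nabla F(\x_n)$ and hence correlated with the noise in $\g_1,\dots,\g_n$, so one cannot naively take expectations inside $\regret_n(\u_n)$. The fix is to work conditionally: the regret bound of the OCO algorithm $\calA$ holds pathwise for \emph{every} comparator sequence including the random $\u_n$, so one bounds $\regret_n(\u_n)$ first (deterministically, in terms of $\|\u_n\|=D$ and $\sum_t\|\beta^{-t}\g_t\|^2$ or the mirror-descent potential), and only then takes expectations — the $\sigma$ and $\sigma\sqrt\alpha$ terms arise from $\E\|\g_t-\nabla F(\x_t)\|^2\le\sigma^2$ entering through the $\|\beta^{-t}\g_t\|^2$-type quantities and through the single ``honest'' cross term $\langle \g_t-\nabla F(\x_t), \u_t-\Delta_t\rangle$ whose conditional expectation is not zero only because $\u_t$ depends on $\g_t$, contributing $O(D\sigma)$ per step before renormalization. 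Getting the geometric sums $\sum_t\beta^{-t}(\cdot)$ to collapse to the stated $\alpha$-powers — rather than blowing up like $\beta^{-N}$ — is the other delicate point, handled by the renormalizing weights $\tfrac{1-\beta}{1-\beta^n}$ and the $\beta^n$ prefactors in the final regret term.
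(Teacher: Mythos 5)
Your proposal follows the same route as the paper's proof: decomposing $\|\E[\nabla F(\y_n)]\|$ via the comparator $\u_n$ into a regret-like sum and a telescoping sum, invoking Lemma~\ref{lem:exp-scaling} to identify $\E\langle\nabla F(\x_t),\Delta_t\rangle$ with the function-value decrement, invoking Lemma~\ref{lem:variance-regularizer} for the variance half, and choosing $\mu_t$ so that the $-\calR_t(\Delta_t)$ terms in the regret cancel the $\tfrac{12c}{\alpha^2}\|\Delta_t\|^2$ bound on $\E\|\y_n-\overline\x_n\|^2$. The change of summation, the boundary terms of order $\tfrac{G+\sigma}{\alpha N}$, and the $\tfrac{cD^2}{\alpha^2}$ residual from $\calR_t(\u_n)=\tfrac{\mu_t}{2}D^2$ all line up with the paper's bookkeeping.

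The one genuine gap is the derivation of the $\sigma\sqrt{\alpha}$ term. You describe the noise cross term $\langle\nabla F(\x_t)-\g_t,-\u_n\rangle$ as ``vanish[ing] or stay[ing] $O(\sigma)$-small,'' but a naive per-term bound of $O(\sigma D)$, after weighting by $p_{n,t}$ and averaging over $n$, only yields $O(\sigma)$, not $O(\sigma\sqrt\alpha)$. What the paper actually does is apply Cauchy--Schwarz to the \emph{whole inner sum}, $\E\sum_t\beta^{n-t}\langle\nabla F(\x_t)-\g_t,-\u_n\rangle \le \sqrt{\E\|\sum_t\beta^{n-t}(\nabla F(\x_t)-\g_t)\|^2}\cdot\sqrt{\E\|\u_n\|^2}$, and then exploits the orthogonality of the martingale-difference sequence $\nabla F(\x_t)-\g_t$ to get $\E\|\sum_t\beta^{n-t}(\nabla F(\x_t)-\g_t)\|^2 = \sum_t\beta^{2(n-t)}\E\|\nabla F(\x_t)-\g_t\|^2 \le \sigma^2/(1-\beta^2)$. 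This is a square-root (variance) improvement over the triangle-inequality bound $\sigma/(1-\beta)$, and it is exactly what converts $\sigma$ into $\sigma\sqrt{\alpha}$. Your proposal names the martingale structure $\E[\g_t\mid\x_t]=\nabla F(\x_t)$ but uses it only to kill the $\langle\nabla F(\x_t)-\g_t,\Delta_t\rangle$ cross term; the quantitative martingale variance identity on $\sum_t\beta^{n-t}(\nabla F(\x_t)-\g_t)$ is the step you are missing, and without it the theorem's rate does not close.

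Two smaller notations to fix: in the cross term and in the regularizer comparison you write $\u_t$ where it should be $\u_n$ (the comparator for the regret up to time $n$), and the identity you need is $s_t\Delta_t=\x_t-\x_{t-1}$, not $\Delta_t=\x_t-\x_{t-1}$; Lemma~\ref{lem:exp-scaling} is precisely what licenses replacing $F(\x_t)-F(\x_{t-1})$ by $\langle\nabla F(\x_t),\Delta_t\rangle$ in expectation despite the random scaling.
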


Here the second line denotes the weighted average of the sequence of static regrets, $\regret_n(\w_n)$, w.r.t. the exponentiated and regularized loss $\ell_t(\Delta) = \langle \beta^{-t}\g_t,\Delta\rangle$ and comparator $\u_n$ defined in \eqref{eq:eo2nc-u}, as we discussed earlier. To see an immediate implication of Theorem \ref{thm:o2nc}, assume the average regret is no larger than the terms in the first line. Then by proper tuning $D=\frac{1}{\sqrt{\alpha} N}$ and $\alpha=\max\{\frac{1}{N^{2/3}}, \frac{c^{2/7}}{N^{4/7}}\}$, we have $\E\|\nabla F(\overline\w)\|_c \lesssim \frac{1}{N^{1/3}} + \frac{c^{1/7}}{N^{2/7}}$.

\section{Recovering SGDM: \\ {\alg} with OMD}
\label{sec:sgdm}

In the previous sections, we have shown that Exponentiated O2NC can convert any OCO algorithm into a non-convex optimization algorithm in such a way that small regret bounds transform into convergence guarantees. So, the natural next step is to instantiate \alg\ with some particular OCO algorithm. In this section we carry out this task and discover that the resulting method not only achieves optimal convergence guarantees, but is also \emph{nearly identical} to the standard SGDM optimization algorithm!

The OCO algorithm we will use to instantiate Exponentiated O2NC is a simple variant of ``online mirror descent'' (OMD) \cite{beck2003mirror}, which a standard OCO algorithm. However, typical OMD analysis involves clipping the outputs $\Delta_n$ to lie in some pre-specified constraint set. We instead employ a minor modification to the standard algorithm to obviate the need for such clipping. 

Inspired by \cite{duchi2010composite}, we choose our OCO algorithm from the family of Online Mirror Descent (OMD) with composite loss.  Given a sequence of gradients $\tilde\g_t:=\beta^{-t} \g_t$ and convex functions $\psi_t(\Delta), \calR_t(\Delta), \phi_t(\Delta)$, OMD with composite loss defines $\Delta_{t+1}$ as:
\begin{equation*}
    \argmin_\Delta \langle\tilde\g_t, \Delta\rangle + D_{\psi_t}(\Delta,\Delta_t) + \underbrace{\calR_{t+1}(\Delta) + \phi_t(\Delta)}_{\text{composite loss}}.
    \label{eq:omd-update}
\end{equation*}
Here $D_{\psi_t}$ denotes the Bregman divergence of $\psi_t$, and $\calR_{t+1}(\Delta)+\phi_t(\Delta)$ denotes the composite loss. 
The composite loss consists of two components. Firstly, $\calR_{t+1}(\Delta)=\frac{\mu_{t+1}}{2}\|\Delta\|^2$ controls the variance of $\y_n$, as discussed in Section \ref{sec:alg}. Secondly, OMD is known to struggle under unconstrained domain setting \cite{orabona2016scale}, but this can be fixed with proper regularization, as done in \citet{fang2021online} (implicitly), and \citet{jacobsen2022parameter} (explicitly). Following a similar approach, we set $\phi_t(\Delta)=(\frac{1}{\eta_{t+1}}-\frac{1}{\eta_t})\|\Delta\|^2$ to prevent the norm of $\|\Delta\|$ from being too large.


With $\psi_t(\Delta) = \frac{1}{2\eta_t}\|\Delta\|^2$ where $0<\eta_{t+1}\le \eta_t$, Theorem \ref{thm:OGD-brief} provides a regret bound for this specific OCO algorithm.

\begin{theorem}
\label{thm:OGD-brief}
Let $\Delta_1=0$ and $\Delta_{t+1} = \argmin_\Delta \langle\tilde\g_t,\Delta\rangle + \frac{1}{2\eta_t}\|\Delta-\Delta_t\|^2 + \frac{\mu_{t+1}}{2}\|\Delta\|^2 + (\frac{1}{\eta_{t+1}}-\frac{1}{\eta_t})\|\Delta\|^2$. Then
\begin{align*}
    &\sum_{t=1}^n \langle \tilde\g_t, \Delta_t-\u\rangle + \calR_t(\Delta_t) - \calR_t(\u) \\
    &\le \left(\frac{2}{\eta_{n+1}} + \frac{\mu_{n+1}}{2}\right)\|\u\|^2 + \frac{1}{2}\sum_{t=1}^n \eta_t\|\tilde\g_t\|^2.
\end{align*}
\end{theorem}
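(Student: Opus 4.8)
The plan is to recognize this update as an instance of composite-objective mirror descent and run the standard analysis, being careful about the two nonstandard features: the step sizes $\eta_t$ are nonincreasing (so the Bregman terms will not telescope cleanly), and the composite regularizer $\calR_{t+1}+\phi_t$ is indexed one step ahead of the regret we want to control. Throughout, write $\psi_t(\Delta)=\frac{1}{2\eta_t}\|\Delta\|^2$ and $\varphi_t(\Delta)=\calR_{t+1}(\Delta)+\phi_t(\Delta)$, so the update is $\Delta_{t+1}=\argmin_\Delta \langle\tilde\g_t,\Delta\rangle + D_{\psi_t}(\Delta,\Delta_t)+\varphi_t(\Delta)$, which has a unique minimizer by strong convexity.

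\emph{Per-step inequality.} First-order optimality at $\Delta_{t+1}$ gives $-\tilde\g_t-\frac{1}{\eta_t}(\Delta_{t+1}-\Delta_t)\in\partial\varphi_t(\Delta_{t+1})$. Plugging this into the subgradient inequality for $\varphi_t$ evaluated at the comparator $\u$, and using the three-point identity $\frac{1}{\eta_t}\langle\Delta_{t+1}-\Delta_t,\u-\Delta_{t+1}\rangle = D_{\psi_t}(\u,\Delta_t)-D_{\psi_t}(\u,\Delta_{t+1})-D_{\psi_t}(\Delta_{t+1},\Delta_t)$, yields
\[ \langle\tilde\g_t,\Delta_{t+1}-\u\rangle + \varphi_t(\Delta_{t+1})-\varphi_t(\u) \le D_{\psi_t}(\u,\Delta_t)-D_{\psi_t}(\u,\Delta_{t+1}) - \tfrac{1}{2\eta_t}\|\Delta_{t+1}-\Delta_t\|^2. \]
Adding $\langle\tilde\g_t,\Delta_t-\Delta_{t+1}\rangle$ to both sides and applying Young's inequality $\langle\tilde\g_t,\Delta_t-\Delta_{t+1}\rangle-\frac{1}{2\eta_t}\|\Delta_{t+1}-\Delta_t\|^2\le\frac{\eta_t}{2}\|\tilde\g_t\|^2$ converts the leading term to $\langle\tilde\g_t,\Delta_t-\u\rangle$ and leaves an error of $\frac{\eta_t}{2}\|\tilde\g_t\|^2$.

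\emph{Summation, index shift, and the key cancellation.} Sum the per-step bound over $t=1,\dots,n$. On the left, since $\Delta_1=0$ and $\calR\ge0$ we get $\sum_t\calR_{t+1}(\Delta_{t+1})=\sum_t\calR_t(\Delta_t)+\calR_{n+1}(\Delta_{n+1})\ge\sum_t\calR_t(\Delta_t)$, while $\sum_t\calR_{t+1}(\u)=\sum_t\calR_t(\u)+\calR_{n+1}(\u)-\calR_1(\u)\le\sum_t\calR_t(\u)+\frac{\mu_{n+1}}{2}\|\u\|^2$; and $\sum_t\phi_t(\Delta_{t+1})=\sum_{s=2}^{n+1}(\frac{1}{\eta_s}-\frac{1}{\eta_{s-1}})\|\Delta_s\|^2\ge0$ with $\sum_t\phi_t(\u)=(\frac{1}{\eta_{n+1}}-\frac{1}{\eta_1})\|\u\|^2$. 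On the right, $\sum_t[D_{\psi_t}(\u,\Delta_t)-D_{\psi_t}(\u,\Delta_{t+1})]$ does not telescope because $\eta_t$ varies; the usual reindexing bounds it by $D_{\psi_1}(\u,\Delta_1)-D_{\psi_n}(\u,\Delta_{n+1})+\sum_{t=2}^n(\frac{1}{2\eta_t}-\frac{1}{2\eta_{t-1}})\|\u-\Delta_t\|^2$, and then $\|\u-\Delta_t\|^2\le2\|\u\|^2+2\|\Delta_t\|^2$ produces a term $\sum_{t=2}^n(\frac{1}{\eta_t}-\frac{1}{\eta_{t-1}})\|\Delta_t\|^2$ alongside an $O(\tfrac{1}{\eta_{n+1}})\|\u\|^2$ contribution (using $\Delta_1=0$ and $D_{\psi_n}(\u,\Delta_{n+1})\ge0$). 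The crucial point — and the reason $\phi_t$ is in the algorithm — is that this $\|\Delta_t\|^2$ term is exactly the one supplied by $\sum_t\phi_t(\Delta_{t+1})$ on the left, so the two cancel. Collecting the surviving $\|\u\|^2$ coefficients ($\frac{\mu_{n+1}}{2}$ from the $\calR$ shift, $\frac{1}{\eta_{n+1}}$ from the $\phi$ comparator term, and $\le\frac{1}{\eta_{n+1}}$ from the Bregman bound) gives exactly $(\frac{2}{\eta_{n+1}}+\frac{\mu_{n+1}}{2})\|\u\|^2$, and the per-step errors sum to $\frac12\sum_{t=1}^n\eta_t\|\tilde\g_t\|^2$.

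\emph{Main obstacle.} The whole argument is essentially careful bookkeeping; the one load-bearing step is the cancellation in the summation between the $\|\Delta_t\|^2$ terms generated by the decreasing step sizes in the Bregman telescoping and those injected by $\phi_t$. Making the coefficients match exactly, and checking the boundary indices $t=1$ (where $\Delta_1=0$ kills $\calR_1$ and $D_{\psi_1}(\u,\Delta_1)=\frac{1}{2\eta_1}\|\u\|^2$) and $t=n{+}1$ (where $\calR_{n+1}\ge0$ is dropped on one side but kept as $\frac{\mu_{n+1}}{2}\|\u\|^2$ on the other), is where the care is required — and is precisely what allows the algorithm to run on an unconstrained domain without clipping $\Delta_n$.
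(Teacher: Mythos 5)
Your proof is correct and follows essentially the same route as the paper's: the per-step bound is the paper's Lemma~\ref{lem:OMD-one-step} (first-order optimality plus the three-point identity), Young's inequality handles $\langle\tilde\g_t,\Delta_t-\Delta_{t+1}\rangle - D_{\psi_t}(\Delta_{t+1},\Delta_t)$, and the load-bearing observation — that the $\|\Delta_t\|^2$ terms leaking out of the non-telescoping Bregman differences are exactly absorbed by $\phi_t$ — is precisely what the paper exploits. The one cosmetic difference is where the cancellation happens: the paper regroups $D_{\psi_t}(\u,\Delta_t)-D_{\psi_t}(\u,\Delta_{t+1})$ as $[D_{\psi_t}(\u,\Delta_t)-D_{\psi_{t+1}}(\u,\Delta_{t+1})]+[D_{\psi_{t+1}}(\u,\Delta_{t+1})-D_{\psi_t}(\u,\Delta_{t+1})]$ so that the second bracket cancels against $\phi_t(\u)-\phi_t(\Delta_{t+1})$ per step (giving a clean $\le(\tfrac{2}{\eta_{t+1}}-\tfrac{2}{\eta_t})\|\u\|^2$ contribution that telescopes), whereas you sum first and then reindex; both are valid and give the same $(\tfrac{2}{\eta_{n+1}}+\tfrac{\mu_{n+1}}{2})\|\u\|^2$ bound.
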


Note that the implicit OMD update described in Theorem \ref{thm:OGD-brief} can be explicitly represented as follows:
\begin{equation}
    \Delta_{t+1} = \frac{\Delta_t-\eta_t\tilde\g_t}{1+\eta_t\mu_{t+1}+\eta_t(\frac{1}{\eta_{t+1}}-\frac{1}{\eta_t})}.
    \label{eq:ogd-explicit}
\end{equation}
When $\calR_t=0$ (implying $\mu_t=0$), the update formula in \eqref{eq:ogd-explicit} simplifies to an approximation of \emph{online gradient descent} \cite{zinkevich2003online}, albeit with an additional scaling.

\subsection{Reduction of \alg}

Upon substituting $\tilde\g_t = \beta^{-t}\g_t$ where $\g_t=\nabla f(\x_t,z_t)$, Theorem \ref{thm:OGD-brief} provides a regret bound for $\regret_n(\u_n)$ in the convergence bound in Theorem \ref{thm:o2nc}. Consequently, we can bound $\E\|\nabla F(\overline\x)\|_c$ for \alg with the unconstrained variant of OMD as the OCO subroutine (with update formula described in \eqref{eq:ogd-explicit}). Formally, we have the following result:

\begin{restatable}{theorem}{SGDM}
\label{thm:sgdm}
Follow Assumption \ref{as:optimization} and consider any $c>0$. Let $\Delta_1=0$ and update $\Delta_t$ by
\begin{equation*}
    \Delta_{t+1} = \frac{\Delta_t - \eta_t\beta^{-t}\g_t}{1+\eta_t\mu_{t+1}+\eta_t(\frac{1}{\eta_{t+1}}-\frac{1}{\eta_t})}.
\end{equation*}
Let $\mu_t=\beta^{-t}\mu$, $\eta_t=\beta^t\eta$, $\beta=1-\alpha$, $\mu=\frac{24F^*c}{(G+\sigma)\alpha^{5/2}N}$, $\eta=\frac{2F^*}{(G+\sigma)^2N}$, $\alpha=\max\{N^{-2/3}, \frac{(F^*)^{4/7}c^{2/7}}{(G+\sigma)^{6/7}N^{4/7}}\}$. Then for $N$ large enough such that $\alpha \le \frac{1}{2}$,
\begin{align*}
    \E\|\nabla F(\overline\x)\|_c
    &\lesssim \frac{G+\sigma}{N^{1/3}} + \frac{(F^*)^{2/7}(G+\sigma)^{4/7}c^{1/7}}{N^{2/7}}.
\end{align*}
\end{restatable}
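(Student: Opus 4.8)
The plan is to feed the regret bound of Theorem~\ref{thm:OGD-brief} into the general convergence guarantee of Theorem~\ref{thm:o2nc}. First I would observe that the displayed update is exactly the explicit OMD recursion~\eqref{eq:ogd-explicit} run on the exponentiated gradients $\tilde\g_t = \beta^{-t}\g_t$ with $\psi_t(\Delta) = \tfrac{1}{2\eta_t}\|\Delta\|^2$ and regularizer $\calR_t(\Delta) = \tfrac{\mu_t}{2}\|\Delta\|^2$, so Theorem~\ref{thm:OGD-brief} applies. Moreover, the stated choice $\mu = \tfrac{24F^*c}{(G+\sigma)\alpha^{5/2}N}$ is arranged precisely so that, setting $D = \tfrac{\mu\alpha^2}{24c} = \tfrac{F^*}{(G+\sigma)\sqrt{\alpha}\,N}$, the regularizer matches the one required by Theorem~\ref{thm:o2nc}, namely $\mu_t = \tfrac{24cD}{\alpha^2}\beta^{-t}$. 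Since the comparator $\u_n$ from~\eqref{eq:eo2nc-u} has $\|\u_n\| = D$, and OMD regret bounds hold simultaneously for all (in particular data-dependent) comparators, Theorem~\ref{thm:OGD-brief} gives $\regret_n(\u_n) \le \bigl(\tfrac{2}{\eta_{n+1}} + \tfrac{\mu_{n+1}}{2}\bigr)D^2 + \tfrac{1}{2}\sum_{t=1}^n \eta_t\|\tilde\g_t\|^2$. Taking expectations with $\E\|\g_t\|^2 \le G^2 + \sigma^2$, using $\eta_t\|\tilde\g_t\|^2 = \eta\beta^{-t}\|\g_t\|^2$ and $\sum_{t=1}^n \beta^{-t} \le \beta^{-n}/\alpha$, and recalling $\eta_{n+1} = \beta^{n+1}\eta$, $\mu_{n+1} = \beta^{-(n+1)}\mu$, $\beta \ge \tfrac12$, I get $\E\,\regret_n(\u_n) \lesssim \beta^{-n}\bigl[(\tfrac1\eta + \mu)D^2 + \tfrac{\eta(G+\sigma)^2}{\alpha}\bigr]$.

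Next I would substitute this into Theorem~\ref{thm:o2nc}, where the regret appears only through $\tfrac{1}{DN}\bigl(\beta^{N+1}\regret_N(\u_N) + \alpha\sum_{n=1}^N \beta^n\regret_n(\u_n)\bigr)$. The essential cancellation is that the weights $\beta^n$ annihilate the $\beta^{-n}$ growth of the per-round bound, so $\alpha\sum_{n=1}^N\beta^n\regret_n(\u_n) \lesssim \alpha N\bigl[(\tfrac1\eta + \mu)D^2 + \tfrac{\eta(G+\sigma)^2}{\alpha}\bigr]$ while the single-round term $\beta^{N+1}\regret_N(\u_N)$ is of lower order. Dividing by $DN$, the whole regret contribution is $\lesssim \alpha(\tfrac1\eta + \mu)D + \tfrac{\eta(G+\sigma)^2}{D}$, so Theorem~\ref{thm:o2nc} yields
\begin{align*}
\E\|\nabla F(\overline\x)\|_c
&\lesssim \frac{F^*}{DN} + \frac{G+\sigma}{\alpha N} + \sigma\sqrt{\alpha} + \frac{cD^2}{\alpha^2} \\
&\quad + \alpha\Bigl(\frac1\eta + \mu\Bigr)D + \frac{\eta(G+\sigma)^2}{D}.
\end{align*}

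Finally I would substitute $D = \tfrac{F^*}{(G+\sigma)\sqrt\alpha\,N}$ and $\eta = \tfrac{2F^*}{(G+\sigma)^2N}$ and collapse terms: $\tfrac{F^*}{DN}$, $\tfrac{\eta(G+\sigma)^2}{D}$ and $\alpha\tfrac{D}{\eta}$ are all of order $(G+\sigma)\sqrt\alpha$; $\alpha\mu D = \tfrac{24cD^2}{\alpha} \lesssim \tfrac{cD^2}{\alpha^2}$ since $\alpha\le\tfrac12$; and $\tfrac{cD^2}{\alpha^2} = \tfrac{c(F^*)^2}{(G+\sigma)^2\alpha^3 N^2}$. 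Hence $\E\|\nabla F(\overline\x)\|_c \lesssim (G+\sigma)\sqrt\alpha + \tfrac{G+\sigma}{\alpha N} + \tfrac{c(F^*)^2}{(G+\sigma)^2\alpha^3 N^2}$. The choice $\alpha = \max\{N^{-2/3},\, \tfrac{(F^*)^{4/7}c^{2/7}}{(G+\sigma)^{6/7}N^{4/7}}\}$ balances these: when the first branch is active, $(G+\sigma)\sqrt\alpha$ and $\tfrac{G+\sigma}{\alpha N}$ both equal $(G+\sigma)N^{-1/3}$, and the inequality $N^{-2/3}\ge\tfrac{(F^*)^{4/7}c^{2/7}}{(G+\sigma)^{6/7}N^{4/7}}$ defining this branch is exactly $\tfrac{c(F^*)^2}{(G+\sigma)^2}\le (G+\sigma)N^{-1/3}$, controlling the last term; when the second branch is active, $(G+\sigma)\sqrt\alpha$ and the last term both equal $\tfrac{(F^*)^{2/7}(G+\sigma)^{4/7}c^{1/7}}{N^{2/7}}$, and the branch condition $(G+\sigma)^3 \le (F^*)^2 c N^{1/3}$ is precisely what forces $\tfrac{G+\sigma}{\alpha N}$ to be dominated. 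This gives the claimed rate.

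The main obstacle is the bookkeeping: correctly propagating the exponential weights $\beta^{\pm n}$ so they cancel, matching all index shifts between Theorems~\ref{thm:OGD-brief} and~\ref{thm:o2nc} and the parameter settings, and verifying that the two-sided $\max$ defining $\alpha$ simultaneously balances $\sigma\sqrt\alpha$, $\tfrac{cD^2}{\alpha^2}$, and the spurious $\tfrac{G+\sigma}{\alpha N}$ term. A minor technical point to handle carefully is that $\u_n$ is random (it depends on $\nabla F(\x_1),\dots,\nabla F(\x_n)$), so one must invoke the uniform-over-comparators form of the OMD regret bound before taking expectations.
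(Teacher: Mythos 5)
Your proposal is correct and follows essentially the same route as the paper's proof: instantiate Theorem~\ref{thm:OGD-brief} with $\tilde\g_t=\beta^{-t}\g_t$ and the specified $\eta_t,\mu_t$, bound the expected per-round regret using $\E\|\g_t\|^2\le G^2+\sigma^2$ and the geometric sum $\sum_{t\le n}\beta^{-t}\le\beta^{-n}/\alpha$, substitute the resulting $\beta^{n+1}\E\regret_n(\u_n)$ bound into Theorem~\ref{thm:o2nc}, and balance the surviving terms with the stated $D,\eta,\alpha$. The only differences are cosmetic: you keep the regret bound symbolic in $\eta,\mu$ a bit longer before plugging in, and you make explicit the (correct but implicit in the paper) point that the anytime OMD regret bound is needed because $\u_n$ is data-dependent.
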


As an immediate implication, upon solving $\E\|\nabla F(\overline\x)\|_c \le \epsilon$ for $N$, we conclude that Algorithm \ref{alg:O2NC-exp-avg} instantiated with unconstrained OGD finds $(c,\epsilon)$-stationary point within $N=O(\max\{(G+\sigma)^3\epsilon^{-3}, F^*(G+\sigma)^2c^{1/2}\epsilon^{-7/2}\})$ iterations. 
Moreover, in Section \ref{sec:lower} we will show that this rate is optimal.

Furthermore, as discussed in Section \ref{sec:pre}, with $c=O(\epsilon^{-1})$, this algorithm achieves the optimal rate of $O(\epsilon^{-4})$ when $F$ is smooth; with $c=O(1)$, this algorithm also achieves the optimal rate of $O(\epsilon^{-7/2})$ when $F$ is second-order smooth. Remarkably, these optimal rates automatically follows from the reduction from $(c,\epsilon)$-stationary point to $\epsilon$-stationary point (see Lemma \ref{lem:criterion-reduction}), and neither the algorithm nor the analysis is modified to achieve these rates.

\subsection{Unraveling the update to discover SGDM}
Furthermore, upon substituting the definition of $\eta_t,\mu_t$ (and neglecting constants $G,\sigma,F^*$), the update in Theorem \ref{thm:sgdm} can be rewritten as
\begin{equation*}
    \Delta_{t+1} = \frac{\Delta_t - \eta\g_t}{1 + \frac{1}{\beta}(\eta\mu + \alpha)}
\end{equation*}
Let $\Delta_t = -\frac{\beta\eta}{\eta\mu+\alpha}\m_t$, then we can rewrite the update of \alg \ with OGD as follows:
\begin{align}
    &\m_{t+1} = \frac{\beta}{1+\eta\mu} \m_t + \frac{\alpha + \eta\mu}{1+\eta\mu} \g_t, \notag\\
    &\x_{t+1} = \x_t - s_{n+1}\cdot \frac{\beta\eta}{\eta\mu+\alpha} \m_{t+1}.
    \label{eq:sgdm-1}
\end{align}
Remarkably, this update formula recovers the standard SGDM update, with the slight modification of an additional exponential random variable $s_{n+1}$: let $\tilde\beta = \frac{\beta}{1+\eta\mu}$, which denotes the effective momentum constant, and let $\tilde\eta = \frac{\beta\eta}{\eta\mu+\alpha}$ be the effective learning rate, then \eqref{eq:sgdm-1} becomes
\begin{align}
    &\m_{t+1} = \tilde\beta \m_t + (1-\tilde\beta)\g_t, \notag\\
    &\x_{t+1} = \x_t - s_{t+1}\cdot \tilde\eta \m_{t+1}.
    \label{eq:sgdm-1}
\end{align}

\paragraph{Smooth case}
As discussed earlier, when $F$ is smooth, we set $c=O(\epsilon^{-1})$ to recover the optimal rate $N=O(\epsilon^{-4})$. This implies $c=O(N^{1/4})$. Consequently, we can check the parameters defined in Theorem \ref{thm:sgdm} have order $\alpha = O(N^{-1/2})$, $\eta = O(N^{-1})$, and $\mu=O(N^{1/2})$ (note that $\eta\mu\approx \alpha$). Therefore, the effective momentum constant is roughly $\tilde\beta \approx 1-\frac{1}{\sqrt{N}}$, and the effective learning rate is roughly $\tilde\eta\approx \frac{1}{\sqrt{N}}$. Interestingly, these values align with prior works \cite{cutkosky2020momentum}.

\begin{figure*}[ht]
    \vskip 0.2in
    \centering
    \begin{subfigure}[b]{0.3\linewidth}
        \includegraphics[width=\linewidth]{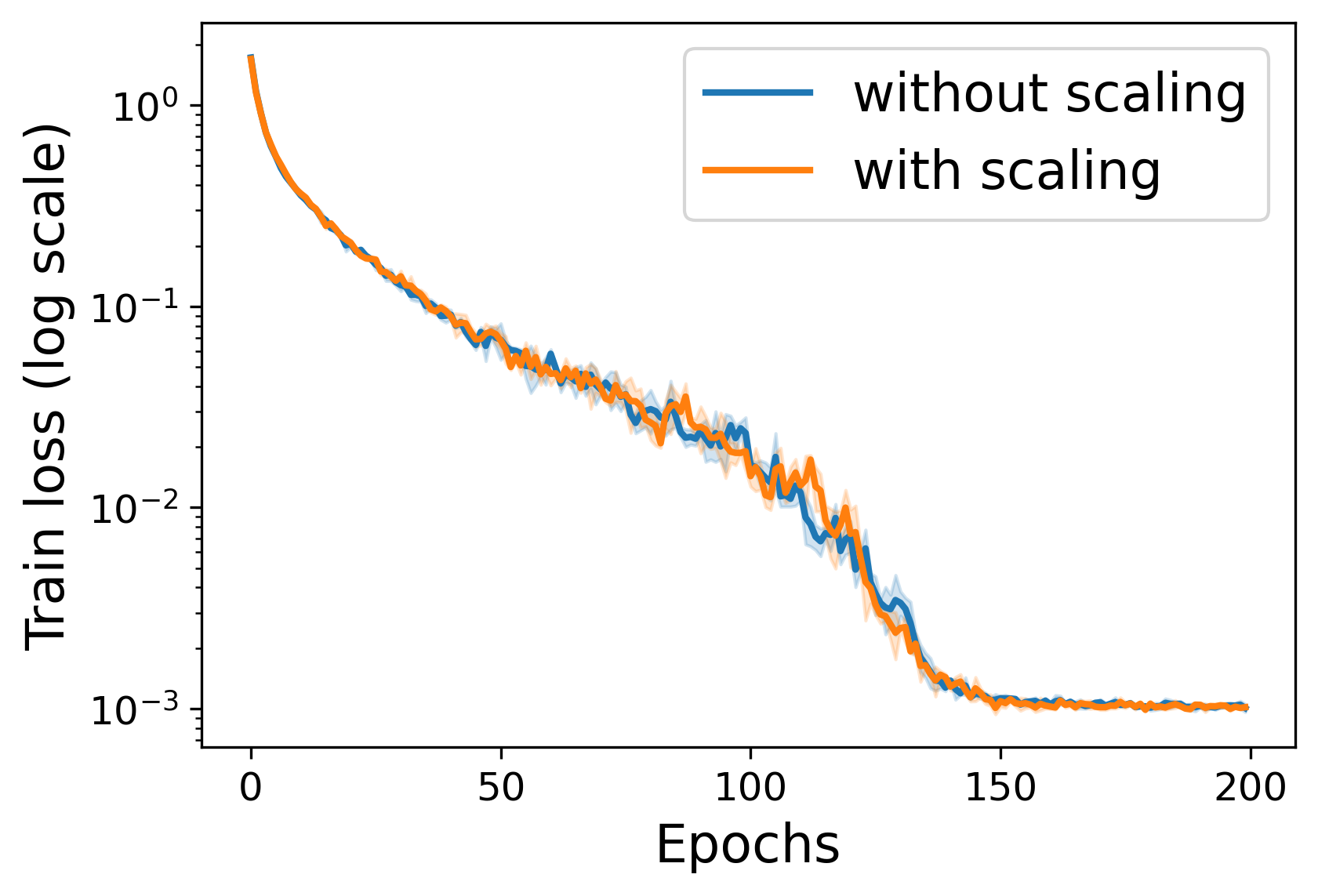}
        \caption{Train Loss}
        \label{fig:1a}
    \end{subfigure}
    \hfill 
    \begin{subfigure}[b]{0.3\linewidth}
        \includegraphics[width=\linewidth]{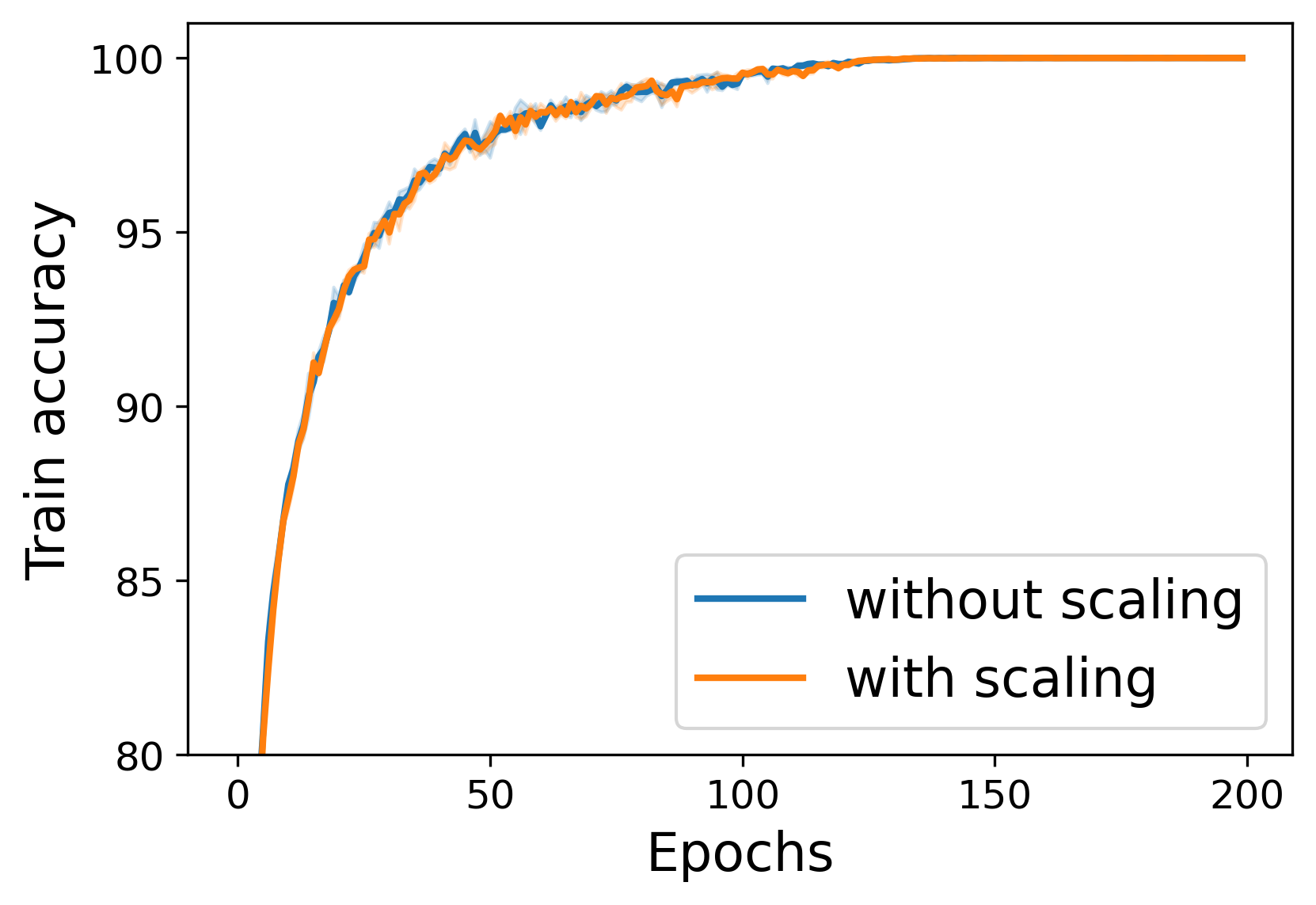}
        \caption{Train Accuracy}
        \label{fig:1b}
    \end{subfigure}
    \hfill 
    \begin{subfigure}[b]{0.3\linewidth}
        \includegraphics[width=\linewidth]{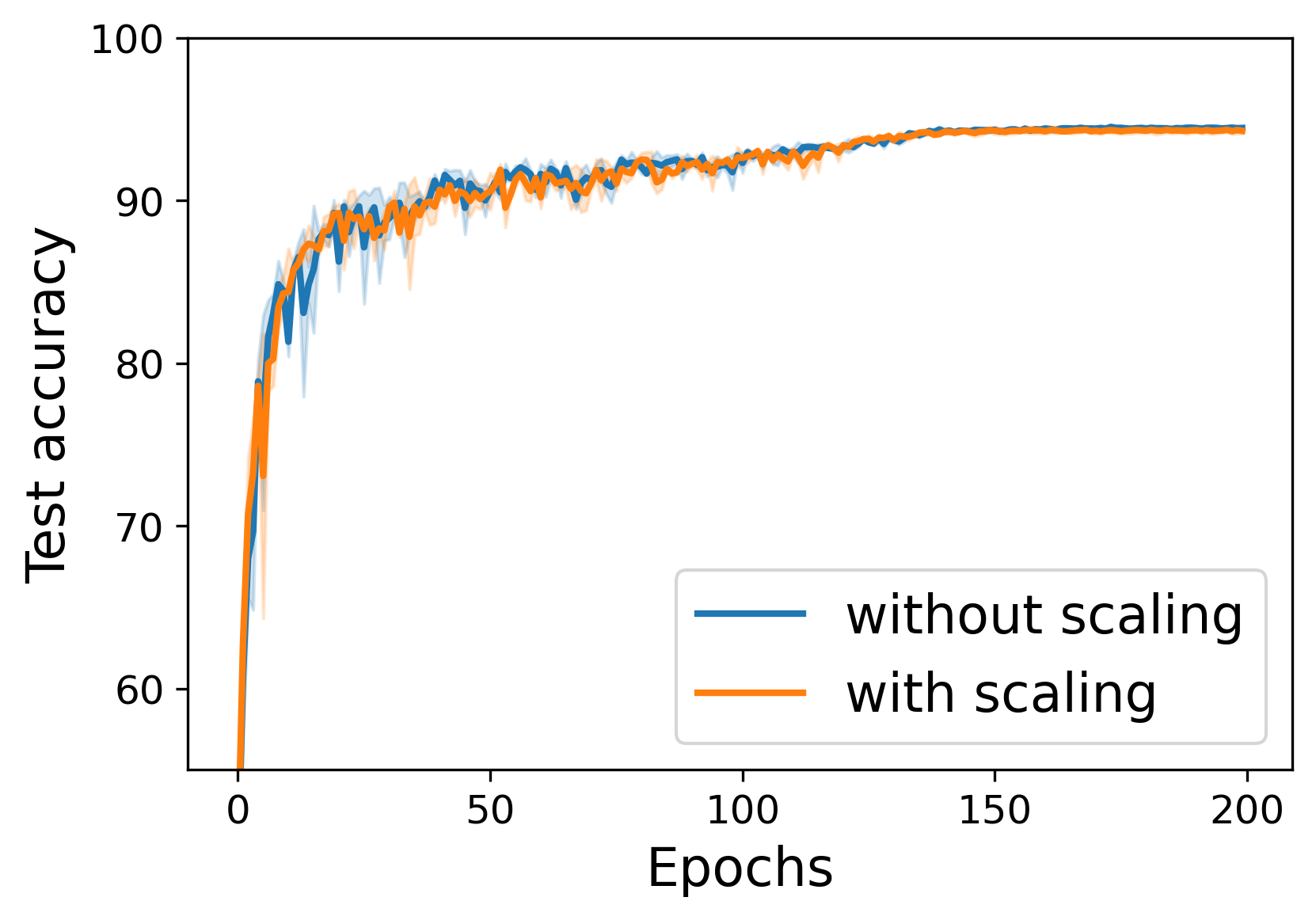}
        \caption{Test Accuracy}
        \label{fig:1c}
    \end{subfigure}
    \caption{Experiments on CIFAR-10 with ResNet-18 Network. The curves represent the average performance of each optimizer in three trials, and the shaded regions denote the standard deviation.}
    \label{fig:experiment}
    \vskip 0.2in
\end{figure*}

\paragraph{Second-order smooth case}
When $F$ is second-order smooth and we set $c=O(1)$, we can check that $\alpha = O(N^{-4/7})$, $\eta=O(N^{-1})$, and $\mu=O(N^{3/7})$ (again $\eta\mu\approx \alpha$). Consequently, the effective momentum should be set to $\tilde\beta \approx 1-\frac{1}{N^{4/7}}$ and the effective learning rate should be $\tilde\eta \approx \frac{1}{N^{3/7}}$. It is interesting to note that in both smooth and second-order smooth cases, $(1-\tilde\beta)\tilde\eta \approx \frac{1}{N}$.
\section{Lower Bounds for finding $(c,\epsilon)$-stationary points}
\label{sec:lower}

In this section we leverage Lemma~\ref{lem:criterion-reduction} to build a lower bound for finding  $(c,\epsilon)$-stationary points. Inuitively, Lemma~\ref{lem:criterion-reduction} suggests that $O(c^{1/2}\epsilon^{-7/2})$ is the optimal rate for finding $(c,\epsilon)$-stationary point. We can indeed prove its optimality using the lower bound construction by \citet{arjevani2019lower} and \citet{cutkosky2023optimal}.

Specifically, \citet{arjevani2019lower} proved the following result: For any constants $H,F^*,\sigma,\epsilon$, there exists objective $F$ and stochastic gradient estimator $\nabla f$ such that (i) $F$ is $H$-smooth, $F(\x_0)-\inf F(\x) \le F^*$, and $\E\|\nabla F(\x) - \nabla f(\x,z)\|^2 \le\sigma^2$; and (ii) any randomized algorithm using $\nabla f$ requires $O(F^*\sigma^2H\epsilon^{-4})$ iterations to find an $\epsilon$-stationary point of $F$.
As a caveat, such construction does not ensure that $F$ is Lipschitz. Fortunately, \citet{cutkosky2023optimal} extended the lower bound construction so that the same lower holds and $F$ is in addition $\sqrt{F^*H}$-Lipschitz.

Consequently, for any $F^*,G,c,\epsilon$, define $H=\sqrt{c\epsilon}$ and $\sigma=G$ and assume $\sqrt{F^*H}\le G$. Then by the lower bound construction, there exists $F$ and $\calO$ such that $F$ is $H$-smooth, $G$-Lipschitz, $F(\x_0)-\inf F(\x) \le F^*$, and $\E\|\nabla F(\x)-\nabla f(\x,z)\|^2 \le G^2$. Lipschitzness and variance bound together imply $\E\|\nabla f(\x,z)\|^2 \le 2G^2$. Moreover, finding an $\epsilon$-stationary of $F$ requires $\Omega(F^*\sigma^2H\epsilon^{-4}) = \Omega(F^*G^2c^{1/2}\epsilon^{-7/2})$ iterations (since $\sigma=G$, $H=\sqrt{c\epsilon}$). 

Finally, note that $H=\sqrt{c\epsilon}$ satisfies $c=H^2\epsilon^{-1}$. Therefore by Lemma \ref{lem:criterion-reduction}, a $(c, \epsilon)$-stationary point of $F$ is also an $\epsilon$-stationary of $F$, implying that finding $(c,\epsilon)$-stationary requires at least $\Omega(F^*G^2c^{1/2}\epsilon^{-7/2})$ iterations as well. Putting these together, we have the following result:

\begin{corollary}
\label{cor:lower-bound}
For any $F^*, c, \epsilon$ and $G\ge \sqrt{F^*}(c\epsilon)^{1/4}$, there exists objective $F$ and stochastic gradient $\nabla f$ such that (i) $F$ is $G$-Lipschitz, $F(\x_0)-\inf F(\x) \le F^*$, and $\E\|\nabla f(\x,z)\|^2 \le 2G^2$; and (ii) any randomized algorithm using $\nabla f$ requires $\Omega(F^*G^2c^{1/2}\epsilon^{-7/2})$ iterations to find $(c,\epsilon)$-stationary point of $F$.
\end{corollary}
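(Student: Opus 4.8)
The plan is to derive Corollary~\ref{cor:lower-bound} as a direct reduction from the smooth lower bound of \citet{arjevani2019lower}, augmented with the Lipschitz-preserving modification of \citet{cutkosky2023optimal}, via Lemma~\ref{lem:criterion-reduction}. The key idea is that Lemma~\ref{lem:criterion-reduction} says a $(c,\epsilon)$-stationary point of an $H$-smooth function is a $2\epsilon$-stationary point whenever $c = H^2\epsilon^{-1}$; so if finding $\epsilon$-stationary points is hard on some $H$-smooth instance, then finding $(c,\epsilon)$-stationary points must be at least as hard on the same instance, and we just have to choose the parameters of the hard instance so that the smoothness constant and the target accuracy line up with the prescribed $c$.

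Concretely, I would proceed as follows. First, set $H := \sqrt{c\epsilon}$ (so that $c = H^2\epsilon^{-1}$, matching the first case of Lemma~\ref{lem:criterion-reduction}) and set $\sigma := G$. Second, invoke the \citet{cutkosky2023optimal} version of the \citet{arjevani2019lower} construction with these values of $H$, $F^*$, $\sigma$, and target accuracy $\epsilon$: this produces an objective $F$ and stochastic gradient oracle $\nabla f$ that is $H$-smooth, satisfies $F(\x_0) - \inf F \le F^*$, has $\E\|\nabla F(\x) - \nabla f(\x,z)\|^2 \le \sigma^2 = G^2$, and is additionally $\sqrt{F^*H}$-Lipschitz; the hypothesis $G \ge \sqrt{F^*}(c\epsilon)^{1/4} = \sqrt{F^* H}$ guarantees the $\sqrt{F^*H}$-Lipschitz function is in particular $G$-Lipschitz. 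Third, observe that $G$-Lipschitzness gives $\|\nabla F(\x)\| \le G$, so combined with the variance bound and the elementary inequality $\E\|\nabla f\|^2 \le 2\|\nabla F\|^2 + 2\E\|\nabla f - \nabla F\|^2 \le 2G^2 + 2G^2$—wait, more carefully $\E\|\nabla f\|^2 \le 2\|\nabla F\|^2 + 2\sigma^2 \le 4G^2$; to get the cleaner $2G^2$ one uses $\E\|\nabla f\|^2 = \|\nabla F\|^2 + \E\|\nabla f - \nabla F\|^2 \le G^2 + G^2 = 2G^2$ since the noise is mean-zero and hence $\E\|\nabla f\|^2 = \|\nabla F\|^2 + \mathrm{Var}$. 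Fourth, apply the lower bound of \citet{arjevani2019lower}: any randomized algorithm needs $\Omega(F^*\sigma^2 H \epsilon^{-4})$ oracle calls to find an $\epsilon$-stationary point, and substituting $\sigma = G$, $H = \sqrt{c\epsilon}$ this becomes $\Omega(F^* G^2 \sqrt{c\epsilon}\,\epsilon^{-4}) = \Omega(F^* G^2 c^{1/2}\epsilon^{-7/2})$. Fifth and finally, by Lemma~\ref{lem:criterion-reduction} with $c = H^2\epsilon^{-1}$, any $(c,\epsilon)$-stationary point of $F$ is a $2\epsilon$-stationary point; hence an algorithm finding a $(c,\epsilon)$-stationary point in fewer than $\Omega(F^* G^2 c^{1/2}\epsilon^{-7/2})$ iterations would find a $2\epsilon$-stationary point in the same number of iterations, contradicting the lower bound (up to the harmless constant $2$ absorbed into $\Omega$).

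There is no deep obstacle here—the corollary is genuinely a one-page consequence of results already cited. The one point that requires minor care is bookkeeping around the constants and the precise form of the oracle bound: one must make sure the Lipschitz constant of the Arjevani et al.\ construction (as extended by Cutkosky--Zhang) is exactly $\sqrt{F^*H}$ (or at most a constant times it), so that the hypothesis $G \ge \sqrt{F^*}(c\epsilon)^{1/4}$ is exactly what is needed to subsume it, and to double-check that the variance-plus-Lipschitz bound yields $\E\|\nabla f\|^2 \le 2G^2$ rather than $4G^2$ (which it does, via the mean-zero decomposition of the second moment). I would also note for completeness that the construction's parameters $H, \sigma, F^*, \epsilon$ are all free to be chosen in the cited theorems, which is what licenses setting $H = \sqrt{c\epsilon}$ and $\sigma = G$ in the first place. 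Modulo these routine checks, the proof is complete.
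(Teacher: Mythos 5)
Your proposal is correct and follows essentially the paper's own argument: set $H = \sqrt{c\epsilon}$ and $\sigma = G$, invoke the Lipschitz-preserving extension of \citet{arjevani2019lower}'s lower bound due to \citet{cutkosky2023optimal} (using $G \ge \sqrt{F^* H}$ to ensure $G$-Lipschitzness), derive $\E\|\nabla f\|^2 \le 2G^2$, and transfer the $\Omega(F^* \sigma^2 H \epsilon^{-4}) = \Omega(F^* G^2 c^{1/2} \epsilon^{-7/2})$ hardness to $(c,\epsilon)$-stationarity via Lemma~\ref{lem:criterion-reduction}. You are in fact slightly more careful than the paper's prose on two minor points: you note that Lemma~\ref{lem:criterion-reduction} yields a $2\epsilon$-stationary point (not $\epsilon$, with the constant absorbed into $\Omega$), and you spell out that $\E\|\nabla f\|^2 = \|\nabla F\|^2 + \E\|\nabla f - \nabla F\|^2 \le 2G^2$ by the mean-zero decomposition rather than the lossier Cauchy--Schwarz route.
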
    
\section{Experiments}
\label{sec:experiment}

In the preceding sections, we theoretically demonstrated that scaling the learning rate by an exponential random variable $s_n$ allows SGDM to satisfy convergence guarantees for non-smooth non-convex optimization. To validate this finding empirically, we implemented the SGDM algorithm with random scaling and assessed its performance against the standard SGDM optimizer without random scaling. Our evaluation involved the ResNet-18 model \cite{he2016deep} on the CIFAR-10 image classification benchmark \cite{krizhevsky2009learning}. For the hyperparameters, we configured the learning rate at $0.01$, the momentum constant at $0.9$, and the weight decay at $5 \times 10^{-4}$. These settings are optimized for training the ResNet model on the CIFAR-10 dataset using SGDM. We use the same hyperparameters for our modified SGDM with random scaling. 

For each optimizer, we ran the experiment three times under the same setting to minimize variability. We recorded the train loss, train accuracy, test loss, and test accuracy (refer to Figure \ref{fig:experiment}). We also recorded the performance of the best iterate, e.g., the lowest train/test loss and the highest train/test accuracy, in each trial (see Table \ref{tab:1}).

\begin{table}[h]
\caption{Performance of the best iterate in each trial.}
\label{tab:1}
\vskip 0.15in
\begin{center}
\begin{small}
\begin{sc}
\begin{tabular}{lcccr}
\toprule
Random Scaling & No & Yes  \\
\midrule
Train loss ($\times10^{-4}$)     & 9.82 $\pm$ 0.21  & 9.55 $\pm$ 0.37 \\
Train accuracy ($\%$)            & 100.0 $\pm$ 0.0  & 100.0 $\pm$ 0.0 \\
Test loss ($\times10^{-2})$      & 21.6 $\pm$ 0.1   & 22.0 $\pm$ 0.4 \\
Test accuracy ($\%$)             & 94.6 $\pm$ 0.1   & 94.4 $\pm$ 0.2 \\
\bottomrule
\end{tabular}
\end{sc}
\end{small}
\end{center}
\vskip -0.1in
\end{table}

These results show that the performance of SGDM with random scaling aligns closely with that of standard SGDM. 
\section{Conclusion}

We introduced $(c,\epsilon)$-stationary point, a relaxed definition of Goldstein stationary point, as a new notion of convergence criterion in non-smooth non-convex stochastic optimization. Furthermore, we proposed Exponentiated O2NC, a modified online-to-non-convex framework, by setting exponential random variable as scaling factor and adopting exponentiated and regularized loss. When applied with unconstrained online gradient descent, this framework produces an algorithm that recovers standard SGDM with random scaling and finds $(c,\epsilon)$-stationary point within $O(c^{1/2}\epsilon^{-7/2})$ iterations. Notably, the algorithm automatically achieves the optimal rate of $O(\epsilon^{-4})$ for smooth objectives and $O(\epsilon^{-7/2})$ for second-order smooth objectives.


One interesting open problem is designing an adaptive algorithm with our Exponentiated O2NC framework. Since our framework, when applied with the simplest OCO algorithm online gradient descent, yields SGDM, a natural question emerges: what if we replace online gradient descent with an adaptive online learning algorithm, such as AdaGrad? Ideally, applied with AdaGrad as the OCO subroutine and with proper tuning, Exponentiated O2NC could recover Adam's update mechanism.
However, the convergence analysis for this scenario is complex and demands a nuanced approach, especially considering the intricacies associated with the adaptive learning rate.
In this vein, concurrent work by \citet{ahn2024understanding} applies a similar concept of online-to-non-convex conversion and connects the Adam algorithm to a principled online learning family known as Follow-The-Regularized-Leader (FTRL).
\section*{Acknowledgment}

We thank Dingzhi Yu for pointing out that original version of Lemma \ref{lem:goldstein-reduction} held only for the classical version of Goldstein stationarity.

\bibliography{references}
\bibliographystyle{apalike}

\newpage
\appendix

\section{Proofs in Section \ref{sec:pre}}
\label{app:pre}

\subsection{Proof of Lemma \ref{lem:criterion-reduction}}

\CriterionReduction*

\begin{proof}
Suppose $\|\nabla F(\x)\|_c \le \epsilon$, then there exists $P\in\calP(S), y\sim P$ such that $\E[\y]=\x$, $\|\E \nabla F(\y)\| \le \epsilon$ and $c\E\|\y-\x\|^2 \le \epsilon$. 

Assume $F$ is $H$-smooth. By Jensen's inequality, $\E\|\y-\x\| \le \sqrt{\epsilon/c} = \epsilon/H$ with $c=H^2\epsilon^{-1}$. Consequently,
\begin{align*}
    \|\nabla F(\x)\|
    &\le \|\E\nabla F(\y)\| + \|\E[\nabla F(\x) - \nabla F(\y)]\| \\
    &\le \|\E\nabla F(\y)\| + \E\|\nabla F(\x) - \nabla F(\y)\| \tag{Jensen's inequality}\\
    &\le \|\E\nabla F(\y)\| + H\E\|\x-\y\| \tag{smoothness}\\
    &\le \epsilon + H \cdot \epsilon/H = 2\epsilon. 
\end{align*}
Next, assume $F$ is $\rho$-second-order smooth. By Taylor approximation, there exists some $\z$ such that $\nabla F(\x) = \nabla F(\y) + \nabla^2F(\x)(\x-\y) + \frac{1}{2}(\x-\y)^T\nabla^3F(\z)(\x-\y)$. Note that $\E[\nabla^2F(\x)(\x-\y)]=\nabla^2F(\x)\E[\x-\y]=0$. Consequently,
\begin{align*}
    \|\nabla F(\x)\|
    &\le \|\E\nabla F(\y)\| + \|\E[\nabla F(\x)-\nabla F(\y)]\| \\
    &\le \|\E\nabla F(\y)\| + \E\|\tfrac{1}{2}(\x-\y)^T\nabla^3F(\z)(\x-\y)\| \tag{Jensen's inequality}\\
    &\le \|\E\nabla F(\y)\| + \tfrac{\rho}{2} \E\|\x-\y\|^2 \tag{second-order-smooth}\\
    &\le \epsilon + \tfrac{\rho}{2}\cdot \epsilon/c = 2\epsilon. \tag{$c=\rho/2$}
\end{align*}
Together these prove the reduction from a $(c,\epsilon)$-stationary point to an $\epsilon$-stationary point.
\end{proof}

\subsection{Proof of Lemma \ref{lem:goldstein-reduction}}

\GoldsteinReduction*

\begin{proof}
By definition of $(c,\epsilon)$-stationary, there exists some distribution of $\y$ such that $\E[\y]=\x$, $\sigma^2:=\E\|\y-\x\|^2 \le \epsilon/c$, and $\|\E \nabla F(\y)\| \le \epsilon$. By Chebyshev’s inequality, 
\begin{align*}
    \P\{ \|\y-\x\| \ge \delta \}
    &= \P\left\{ \|\y-\E[\y]\| \ge \frac{\delta}{\sigma} \cdot\sigma \right\} \\
    &\le \P\left\{ \|\y-\E[\y]\| \ge \frac{\delta}{\sqrt{\epsilon/c}} \cdot\sigma \right\} 
    \le \frac{\epsilon}{c\delta^2}.
\end{align*}
Next, we can construct a clipped random vector $\hat \y\coloneqq \x + \min\{1,\frac{\delta}{\|\y-\x\|} \}(\y-\x)$, and let $\x'\coloneqq\E[\hat \y]$. Then 
\begin{align*}
    \|\x-\x'\| 
    &= \left\| \E\left[ \min\{1,\frac{\delta}{\|\y-\x\|} \}(\y-\x) \right] \right\| \\
    &\le \E \left\| \min\{1,\frac{\delta}{\|\y-\x\|} \}(\y-\x) \right\| \le\delta. \tag{Jensen's inequality}
\end{align*}
In particular, $\|\hat\y-\x'\| \le \|\hat\y-\x\| + \|\x-\x'\| \le 2\delta$.
Moreover, note that $\P\{\hat \y\ne \y\} = \P\{\|\y-\x\|\ge \delta\} \le \frac{\epsilon}{c\delta^2}$. Since $F$ is $G$-Lipschitz,
\begin{align*}
    \|\E[\nabla F(\hat\y) - \nabla F(\y)]\| 
    &= \P\{\hat \y \ne \y\} \|\E[\nabla F(\hat \y)-\nabla F(\y) | \hat\y\ne \y]\| \\
    &\le 2G\cdot \P\{\hat\y\ne \y\}
    \le 2G \cdot \frac{\epsilon}{c\delta^2}.
\end{align*}
Consequently $\|\E[\nabla F(\hat \y)]\| \le \|\E[\nabla F(\y)]\| + \|\E[\nabla F(\hat \y) - \nabla F(\y)]\| \le \epsilon +\frac{2G\epsilon}{c\delta^2}$. This proves that $\x'$ is a $(2\delta, \epsilon + \frac{2G\epsilon}{c\delta^2})$-Goldstein stationary point. 
\end{proof}
\section{Proofs in Section \ref{sec:alg}}

\subsection{Proof of Lemma \ref{lem:variance-regularizer}}

The proof consists of two composite lemmas.
Recall the following notations: $S_n = \{\x_t\}_{t\in[n]}$, $\y_n\sim P_n$ where $P_n(\x_t) = \beta^{n-t}\cdot \frac{1-\beta}{1-\beta^n}$, and $\overline \x_n = \sum_{t=1}^n \beta^{n-t}\x_t\cdot \frac{1-\beta}{1-\beta^n}$. Also note two useful change of summation identities:
\begin{align*}
    &\sum_{n=1}^N\sum_{t=1}^n = \sum_{1\le t\le n\le N} = \sum_{t=1}^N\sum_{n=t}^N, 
    &\sum_{i=1}^n\sum_{i'=1}^{i-1}\sum_{t=i'+1}^i = \sum_{1\le i'<t\le i\le n} = \sum_{t=1}^n\sum_{i=t}^n\sum_{i'=1}^{t-1}.
\end{align*}

\begin{proposition}
\label{prop:variance-regularizer}
$\E_{\y_n,s}\|\y_n-\overline\x_n\|^2 \le \sum_{t=1}^n \lambda_{n,t}\|\Delta_t\|^2$, where
\begin{align}
    \lambda_{n,t} = 4\sum_{i=t}^n\sum_{i'=1}^{t-1} p_{n,i}p_{n,i'}(i-i'), 
    \quad p_{n,i} = P_n(\x_i) = \beta^{n-i}\cdot\frac{1-\beta}{1-\beta^n}. 
    \label{eq:def-lambda}
\end{align}
\end{proposition}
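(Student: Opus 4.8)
The plan is to reduce the left-hand side, via two applications of Jensen's inequality, to a weighted sum of squared \emph{pairwise} distances $\|\x_i-\x_{i'}\|^2$, then expand each such distance through the increments $\Delta_t$, and finally reindex the resulting triple sum using the change-of-summation identity stated just above the proposition.

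First I would dispense with the randomness in $\y_n$. Since $\overline\x_n=\E[\y_n]=\sum_{i=1}^n p_{n,i}\x_i$ and $\sum_{i=1}^n p_{n,i}=1$, I write $\y_n-\overline\x_n=\sum_{i=1}^n p_{n,i}(\y_n-\x_i)$ and apply Jensen (convexity of $\|\cdot\|^2$), which gives, for any realization of the iterates, $\|\y_n-\overline\x_n\|^2\le\sum_{i=1}^n p_{n,i}\|\y_n-\x_i\|^2$. Taking the expectation over $\y_n$ (which equals $\x_{i'}$ with probability $p_{n,i'}$, independently of the algorithm's randomness), and then using that the diagonal terms vanish together with the symmetry of $p_{n,i}p_{n,i'}\|\x_i-\x_{i'}\|^2$ in $(i,i')$,
\begin{equation*}
\E_{\y_n}\|\y_n-\overline\x_n\|^2\le\sum_{i=1}^n\sum_{i'=1}^n p_{n,i}p_{n,i'}\|\x_i-\x_{i'}\|^2=2\sum_{1\le i'<i\le n}p_{n,i}p_{n,i'}\|\x_i-\x_{i'}\|^2.
\end{equation*}

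Next I would bound $\E_s\|\x_i-\x_{i'}\|^2$ for $i'<i$. Telescoping the update $\x_t=\x_{t-1}+s_t\Delta_t$ gives $\x_i-\x_{i'}=\sum_{t=i'+1}^i s_t\Delta_t$, so Cauchy--Schwarz over these $i-i'$ terms yields the pathwise bound $\|\x_i-\x_{i'}\|^2\le(i-i')\sum_{t=i'+1}^i s_t^2\|\Delta_t\|^2$. Taking the expectation over $s$ via the tower rule and using that each $s_t\sim\Expo(1)$ is drawn \emph{after} $\Delta_t$ (hence is conditionally independent of it) with $\E[s_t^2]=2$, I get $\E_s\|\x_i-\x_{i'}\|^2\le 2(i-i')\sum_{t=i'+1}^i\|\Delta_t\|^2$. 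Substituting this into the previous display gives
\begin{equation*}
\E_{\y_n,s}\|\y_n-\overline\x_n\|^2\le 4\sum_{1\le i'<i\le n}p_{n,i}p_{n,i'}(i-i')\sum_{t=i'+1}^i\|\Delta_t\|^2=4\sum_{1\le i'<t\le i\le n}p_{n,i}p_{n,i'}(i-i')\|\Delta_t\|^2,
\end{equation*}
and applying the identity $\sum_{1\le i'<t\le i\le n}=\sum_{t=1}^n\sum_{i=t}^n\sum_{i'=1}^{t-1}$ collects the coefficient of $\|\Delta_t\|^2$ as precisely $\lambda_{n,t}=4\sum_{i=t}^n\sum_{i'=1}^{t-1}p_{n,i}p_{n,i'}(i-i')$, which is the claim.

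I do not expect a genuine obstacle: the argument is elementary, and the only delicate points are bookkeeping. One must (i) track the constant — the factor $2$ from $\E[s_t^2]=2$ multiplied by the factor $2$ from symmetrizing the pairwise sum produces the $4$ appearing in $\lambda_{n,t}$; (ii) match the index ranges $i'+1\le t\le i$ with the set $\{1\le i'<t\le i\le n\}$ correctly when reindexing; and (iii) be explicit that the $s$-expectation is taken conditionally, so that $\Delta_t$ may legitimately be treated as fixed when $s_t^2$ is averaged. The remaining composite step toward Lemma~\ref{lem:variance-regularizer} — summing $\sum_{n\ge t}\lambda_{n,t}$ to obtain the bound $12/(1-\beta)^2$ — then reduces to estimating geometric-type sums of the form $\sum_{i\le n}\beta^{n-i}$ and $\sum_{i'<t}\beta^{n-i'}(i-i')$, which I expect to be routine.
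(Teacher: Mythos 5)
Your proof is correct and follows essentially the same route as the paper's: reduce to the double sum $\sum_{i,i'}p_{n,i}p_{n,i'}\|\x_i-\x_{i'}\|^2$ via convexity, telescope the increments with Cauchy--Schwarz, use $\E[s_t^2]=2$, and reindex. The only cosmetic difference is that the paper first computes $\E_{\y_n}\|\y_n-\overline\x_n\|^2=\sum_i p_{n,i}\|\x_i-\overline\x_n\|^2$ exactly (no Jensen needed for this step) and then applies convexity to $\|\x_i-\overline\x_n\|^2=\|\sum_{i'}p_{n,i'}(\x_i-\x_{i'})\|^2$, whereas you apply Jensen to $\|\y_n-\overline\x_n\|^2$ before averaging over $\y_n$; both land on the same expression.
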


\begin{proof}
By distribution of $\y_n$, we have
\begin{align*}
    \Ex_{\y_n} \|\y_n - \overline\x_n\|^2 
    &= \sum_{i=1}^n p_{n,i} \|\x_i - \overline \x_n\|^2 \\
    &= \sum_{i=1}^n p_{n,i} \left\| \sum_{i'=1}^n p_{n,i'}(\x_i-\x_{i'}) \right\|^2 \\
    &\le \sum_{i=1}^n\sum_{i'=1}^n p_{n,i}p_{n,i'} \|\x_i-\x_{i'}\|^2 
    = 2\sum_{i=1}^n\sum_{i'=1}^{i-1} p_{n,i}p_{n,i'}\|\x_i-\x_{i'}\|^2.
\end{align*}
The inequality uses convexity of $\|\cdot\|^2$. Next, upon unrolling the recursive update $\x_t = \x_{t-1} + \s_t\Delta_t$,
\begin{align*}
    \|\x_i-\x_{i'}\|^2
    &= \left\| \sum_{t=i'+1}^i s_t\Delta_t\right\|^2 
    \le (i-i') \sum_{t=i'+1}^i s_t^2\|\Delta_t\|^2.
\end{align*}
Note that $s_t$ and $\Delta_t$ are independent and $s_t\sim \Expo(1)$, so $\E_s[s_t^2\|\Delta_t\|^2] = \E_s[s_t^2] \|\Delta_t\|^2 = 2\|\Delta_t\|^p$. Consequently, upon substituting this back and applying change of summation, we have
\begin{align*}
    \Ex_{\y_n,s} \|\y_n-\overline\x_n\|^2 
    &\le 4\sum_{i=1}^n\sum_{i'=1}^{i-1}\sum_{t=i'+1}^i p_{n,i}p_{n,i'}(i-i') \|\Delta_t\|^2 \\
    &= \sum_{t=1}^n \left(4\sum_{i=t}^n\sum_{i'=1}^{t-1}p_{n,i}p_{n,i'}(i-i')\right) \|\Delta_t\|^2.
\end{align*}
We then conclude the proof by substituting the definition of $\lambda_{n,t}$.
\end{proof}

\begin{proposition}
\label{prop:tech-lam-t:N}
Define $\lambda_{n,t}$ as in \eqref{eq:def-lambda}, then $\sum_{n=t}^N \lambda_{n,t} \le \frac{12}{(1-\beta)^2}$.
\end{proposition}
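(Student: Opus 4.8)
The plan is to bound $\sum_{n=t}^N \lambda_{n,t}$ by first simplifying the inner double sum in the definition of $\lambda_{n,t}$, then summing over $n$. Recall $\lambda_{n,t} = 4\sum_{i=t}^n\sum_{i'=1}^{t-1} p_{n,i}p_{n,i'}(i-i')$ with $p_{n,i} = \beta^{n-i}\cdot\frac{1-\beta}{1-\beta^n}$. Write $\alpha = 1-\beta$. The first step is to split $i - i' = (n - i') - (n - i)$ so that the weight $(i-i')$ decomposes into a sum of two geometric-type moments; concretely, $\lambda_{n,t} = 4\bigl[(\sum_{i=t}^n p_{n,i})(\sum_{i'=1}^{t-1}p_{n,i'}(n-i')) - (\sum_{i=t}^n p_{n,i}(n-i))(\sum_{i'=1}^{t-1}p_{n,i'})\bigr]$. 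Since $p_{n,i} = \frac{\alpha}{1-\beta^n}\beta^{n-i}$, each of these partial sums is a truncated geometric series (or its first moment $\sum \beta^k k$), which has a clean closed form. I expect that after this reduction the dominant contribution is $\lambda_{n,t} \lesssim \frac{\alpha}{1-\beta^n}\cdot\beta^{n-t}\cdot(\text{const}/\alpha) $, i.e. $\lambda_{n,t} \lesssim \frac{\beta^{n-t}}{1-\beta^n}$ up to constants, possibly with an extra factor that is still summable.

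The second step is to sum the resulting bound over $n = t, \dots, N$. Using $1 - \beta^n \ge 1 - \beta = \alpha$ is too lossy near small $n$, but here $n \ge t \ge 1$ so $1-\beta^n \ge 1-\beta^t \ge \alpha$; more usefully, since the geometric factor $\beta^{n-t}$ is summed, $\sum_{n=t}^N \beta^{n-t} \le \frac{1}{1-\beta} = \frac{1}{\alpha}$. Combining this with the $\frac{1}{1-\beta^n}$ factor — bounded crudely by $\frac{1}{\alpha}$ when $n$ is small, but then the geometric decay must absorb one power of $\frac1\alpha$ — should land on the claimed $\frac{12}{\alpha^2}$. The natural way to organize this cleanly is to bound $\lambda_{n,t}$ by a quantity of the form $\frac{C_1}{\alpha}\,q_{n,t}$ where $\sum_{n\ge t} q_{n,t} \le \frac{C_2}{\alpha}$, with $C_1 C_2 \le 12$; the change-of-summation identities quoted before the proposition will likely be used to swap the order of the $i, i'$ sums against the outer $n$ sum before evaluating.

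The main obstacle I anticipate is the bookkeeping in the first step: the factor $(i - i')$ inside a double geometric sum, together with the $n$-dependent normalizer $\frac{1}{(1-\beta^n)^2}$ (from $p_{n,i}p_{n,i'}$), makes the closed form somewhat delicate, and one must be careful that the bound on $\lambda_{n,t}$ retains enough decay in $n-t$ to be summable while not losing more than a single factor of $1/\alpha$ from the normalizer. A safe route is to not compute everything exactly but instead use the substitution $i - i' = (i - t) + (t - i') \le (i-t) + (t-1-i') + 1$ together with $\sum_{k\ge 0}\beta^k \le 1/\alpha$ and $\sum_{k\ge 0} k\beta^k \le 1/\alpha^2$, extend both inner sums to infinite geometric series (valid since all terms are nonnegative and we only need an upper bound), and bound $1 - \beta^n \ge \alpha$ in the denominator only after the geometric decay in $n$ has been extracted. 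I would then verify that the accumulated constants multiply to at most $12$, adjusting the crude steps if the constant comes out slightly larger. This is all routine once the decomposition of $(i-i')$ is fixed, so I would keep the writeup short and push the arithmetic into a couple of displayed inequalities.
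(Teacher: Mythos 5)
Your first-step decomposition $i - i' = (n-i') - (n-i)$, giving $\lambda_{n,t} = 4[AB' - A'B]$ with $A,B,A',B'$ the four truncated geometric moments, is algebraically correct and is effectively what the paper does (the paper reparametrizes by $j=i-i'$, $k=n-i$ and evaluates the resulting sums in closed form). If you carry that computation through exactly, you land on the paper's identity $\lambda_{n,t} = 4\big(a\beta^a(1-\beta^b) - b\beta^b(1-\beta^a)\big)/(1-\beta^n)^2$ with $a=n-t+1$, $b=n$, and the key cancellation $b=n$ kills one factor of $(1-\beta^n)$ in the denominator, yielding $\lambda_{n,t}\le 4\,(n-t+1)\beta^{n-t+1}/(1-\beta^n)$; from there the split at $K=\lceil 1/(1-\beta)\rceil$ gives $12/(1-\beta)^2$.

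The gap is in your ``safe route.'' Bounding $i-i'\le (i-t)+(t-1-i')+1$, extending the inner sums to infinite series, and only then controlling $1-\beta^n$ does \emph{not} reproduce the claimed bound, and you cannot fix it by ``adjusting constants.'' Two concrete failures. (i) Extending $\sum_{v=0}^{t-2} v\beta^v \le \beta/\alpha^2$ is off by a factor $\sim 1/(t\alpha)^2$ when $t\alpha\ll 1$; combined with $(1-\beta^n)^{-2}\sim (t\alpha)^{-2}$ near $n\approx t$, the cross term $(t-1-i')$ contributes $\Theta(1/\alpha^3)$ to $\sum_{n\ge t}\lambda_{n,t}$ for small $t$, already exceeding $12/\alpha^2$. (ii) Even for the remaining terms, you are stuck with two factors of $(1-\beta^n)$ in the denominator and only the geometric decay $\beta^{n-t}$ in the numerator, so $\sum_n (n-t+1)\beta^{n-t+1}/(1-\beta^n)^2$ behaves like $\sum_{n\lesssim 1/\alpha} 1/(n\alpha^2) = \Theta(\log(1/\alpha))/\alpha^2$, a genuine logarithmic loss. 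Both issues are precisely the risk you flag (``not losing more than a single factor of $1/\alpha$ from the normalizer''), but the crude route provides no mechanism to recover that factor: only the exact cancellation between $AB'$ and $A'B$ (equivalently, the $(1-\beta^b)$ factor with $b=n$) produces the single power of $(1-\beta^n)$ that makes the $n$-sum land at $O(1/\alpha^2)$ without a log. So you must commit to the exact route, as the paper does.
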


\begin{proof}
In the first part of the proof, we find a good upper bound of $\lambda_{n,t}$. We can rearrange the definition of $\lambda_{n,t}$ as follows.
\begin{align}
    \lambda_{n,t}
    &= 4\left(\frac{1-\beta}{1-\beta^n}\right)^2 \sum_{i=t}^n\sum_{i'=1}^{t-1} \beta^{n-i}\beta^{n-i'}(i-i') \tag{let $j=i-i'$}\\
    &= 4\left(\frac{1-\beta}{1-\beta^n}\right)^2 \sum_{i=t}^n\sum_{j=i-t+1}^{i-1} \beta^{n-i}\beta^{n-i+j}\cdot j \tag{let $k=n-i$}\\
    &= 4\left(\frac{1-\beta}{1-\beta^n}\right)^2 \sum_{k=0}^{n-t} \beta^{2k} \sum_{j=n-k-t+1}^{n-k-1} j\beta^j.
    \label{eq:lam-sum-2a}
\end{align}
The second line uses change of variable that $j=i-i'$, and the third line uses $k=n-i$. Next, 
\begin{align*}
    \sum_{j=n-k-t+1}^{n-k-1} j\beta^j
    = \beta\sum_{j=n-k-t+1}^{n-k-1} \frac{d}{d\beta} \beta^j 
    &= \beta\cdot \frac{d}{d\beta} \left( \sum_{j=n-k-t+1}^{n-k-1} \beta^j \right) \\
    &= \beta\cdot \frac{d}{d\beta} \left( \frac{\beta^{n-k-t+1} - \beta^{n-k}}{1-\beta} \right) \\
    &= \frac{\beta^{a-k+1}-\beta^{b-k+1}}{(1-\beta)^2} + \frac{(a-k)\beta^{a-k} - (b-k)\beta^{b-k}}{1-\beta},
\end{align*}
where $a=n-t+1, b=n$. Upon substituting this back into \eqref{eq:lam-sum-2a}, we have
\begin{align}
    \lambda_{n,t}
    &= 4\left(\frac{1-\beta}{1-\beta^n}\right)^2 \sum_{k=0}^{n-t} \beta^{2k}\left( \frac{\beta^{a-k+1} - \beta^{b-k+1}}{(1-\beta)^2} + \frac{a\beta^{a-k}-b\beta^{b-k}}{1-\beta} - k \frac{\beta^{a-k}-\beta^{b-k}}{1-\beta} \right) \notag\\
    &= 4\left(\frac{1-\beta}{1-\beta^n}\right)^2 \sum_{k=0}^{n-t} \left( \frac{\beta^{a+1} - \beta^{b+1}}{(1-\beta)^2} + \frac{a\beta^{a}-b\beta^{b}}{1-\beta} \right) \beta^k - \frac{\beta^a-\beta^b}{1-\beta} \cdot k\beta^k.
    \label{eq:lam-sum-2b}
\end{align}
For the first term, $\sum_{k=0}^{n-t}\beta^k = \frac{1-\beta^{n-t+1}}{1-\beta} = \frac{1-\beta^a}{1-\beta}$. For the second term,
\begin{align*}
    \sum_{k=0}^{n-t} k\beta^k
    = \beta \cdot \frac{d}{d\beta} \left( \sum_{k=0}^{n-t}\beta^k \right) 
    = \beta \cdot \frac{d}{d\beta} \left( \frac{1-\beta^a}{1-\beta} \right) 
    = \frac{\beta-\beta^{a+1}}{(1-\beta)^2} - \frac{a\beta^a}{1-\beta}.
\end{align*}
Upon substituting this back into \eqref{eq:lam-sum-2b} and simplifying the expression, we have
\begin{align*}
    \lambda_{n,t}
    &= 4\left(\frac{1-\beta}{1-\beta^n}\right)^2 \cdot \left[
    \left( \frac{\beta^{a+1} - \beta^{b+1}}{(1-\beta)^2} + \frac{a\beta^{a}-b\beta^{b}}{1-\beta} \right) \cdot \frac{1-\beta^a}{1-\beta}
    - \frac{\beta^a-\beta^b}{1-\beta} \cdot \left(\frac{\beta-\beta^{a+1}}{(1-\beta)^2} - \frac{a\beta^a}{1-\beta}\right)
    \right] \\
    &= 4\frac{(a\beta^a-b\beta^b)(1-\beta^a) + a\beta^a(\beta^a-\beta^b)}{(1-\beta^n)^2} 
    = \ldots 
    = 4\frac{a\beta^a(1-\beta^b) - b\beta^b(1-\beta^a)}{(1-\beta^n)^2}.
\end{align*}
Upon substituting $a=n-t+1$ and $b=n$, we conclude the first half of the proof with 
\begin{align*}
    \lambda_{n,t} 
    \le 4\frac{a\beta^a(1-\beta^b)}{(1-\beta^n)^2}
    \le 4\cdot \frac{(n-t+1)\beta^{n-t+1}}{1-\beta^n}.
\end{align*}
In the second part, we use this inequality to bound $\sum_{n=t}^N\lambda_{n,t}$. Define $K=\lceil \frac{1}{1-\beta}\rceil$, then
\begin{align}
    \sum_{n=t}^N \lambda_{n,t}
    = \one_{\{t\le K-1\}}\cdot \sum_{n=t}^{K-1} \lambda_{n,t} + \sum_{n=\max\{t,K\}}^N \lambda_{n,t}.
    \label{eq:lam-sum-1}
\end{align}
For the first summation in \eqref{eq:lam-sum-1}, for all $t\le n\le K-1$, we have
\begin{align*}
    \lambda_{n,t}
    &\le 4\cdot \frac{(n-t+1)\beta^{n-t+1}}{1-\beta^n} 
    \stackrel{(i)}{\le} 4\cdot \frac{(n-t+1)\beta^{n-t+1}}{1-\beta^{n-t+1}}
    \stackrel{(ii)}{\le} 4\cdot \frac{1\cdot \beta^1}{1-\beta^1} \le \frac{4}{1-\beta}.
\end{align*}
(i) holds because $\frac{1}{1-\beta^n}$ is decreasing w.r.t. $n$. (ii) holds because $f(x)=\frac{x\beta^x}{1-\beta^x}$ is decreasing for $x\ge 0$ and $\beta\in(0,1)$, so $f(n-t+1) \le f(1)$ since $n-t+1 \ge 1$.
Recall that $K-1\le \frac{1}{1-\beta}$, then the first summation in \eqref{eq:lam-sum-1} can be bounded by
\begin{align}
    \one_{\{t\le K-1\}}\cdot \sum_{n=t}^{K-1} \lambda_{n,t}
    &\le \sum_{n=1}^{K-1} \frac{4}{1-\beta} \le \frac{4}{(1-\beta)^2}.
    \label{eq:lam-sum-1a}
\end{align}
For the second summation in \eqref{eq:lam-sum-1}, for all $n\ge K \ge \frac{1}{1-\beta}$, 
\begin{align*}
    \frac{1}{1-\beta^n} 
    \stackrel{(i)}{\le} \frac{1}{1-\beta^{\frac{1}{1-\beta}}} 
    \stackrel{(ii)}{\le} \lim_{x\to 1} \frac{1}{1-x^{\frac{1}{1-x}}} = \frac{e}{e-1} \le 2.
\end{align*}
(i) holds because $\frac{1}{1-\beta^n}$ is decreasing. (ii) holds because $f(x)=\frac{1}{1-x^{\frac{1}{1-x}}}$ is increasing for $x\ge 0$, so $f(\beta) \le \lim_{x\to 1}f(x)$ for all $\beta\in(0,1)$. Consequently, the second summation in \eqref{eq:lam-sum-1} can be bounded by
\begin{align}
    \sum_{n=\max\{t,K\}}^N \lambda_{n,t}
    \le \sum_{n=\max\{t,K\}}^N 4\cdot \frac{(n-t+1)\beta^{n-t+1}}{1-\beta^n} 
    \le 8\sum_{n=t}^N (n-t+1)\beta^{n-t+1} 
    = 8\sum_{n=1}^{N-t} n\beta^n
    \label{eq:lam-sum-1b}
\end{align}
By change of summation,
\begin{align*}
    \sum_{n=1}^N n\beta^n
    &= \sum_{n=1}^N \sum_{i=1}^n \beta^n 
    = \sum_{i=1}^N \sum_{n=i}^N \beta^n
    \le \sum_{i=1}^N \frac{\beta^i}{1-\beta}
    \le \frac{1}{(1-\beta)^2}.
\end{align*}
We then conclude the proof by substituting \eqref{eq:lam-sum-1a}, \eqref{eq:lam-sum-1b} into \eqref{eq:lam-sum-1}.
\end{proof}

\VarianceToRegularizer*

\begin{proof}
By Proposition \ref{prop:variance-regularizer} and Proposition \ref{prop:tech-lam-t:N}, we have
\begin{align*}
    \Ex_s\sum_{n=1}^N\Ex_{\y_n} \|\y_n-\overline\x_n\|^2 
    \stackrel{(i)}{\le} \sum_{n=1}^N\sum_{t=1}^n \lambda_{n,t}\|\Delta_t\|^2 
    \stackrel{(ii)}{=} \sum_{t=1}^N\left(\sum_{n=t}^N \lambda_{n,t}\right) \|\Delta_t\|^2 
    \stackrel{(iii)}{\le} \sum_{t=1}^N \frac{12}{(1-\beta)^2} \|\Delta_t\|^2. 
\end{align*}
Here (i) is from Proposition \ref{prop:variance-regularizer}, (ii) is from change of summation, and (iii) is from Proposition \ref{prop:tech-lam-t:N}.
\end{proof}

\subsection{Proof of Lemma \ref{lem:exp-scaling}}

\ExponentialScaling*
\begin{proof}
Denote $p(s)=\lambda\exp(-\lambda s)$ as the pdf of $s$. Upon expanding the expectation, we can rewrite the LHS as
\begin{align*}
    \Ex_s [F(\x+s\Delta) - F(\x)] 
    &= \int_0^\infty [F(\x+s\Delta) - F(\x)] p(s) \, ds \\
    &\stackrel{(i)}{=} \int_0^\infty \left( \int_0^s \langle \nabla F(\x+t\Delta), \Delta\rangle \, dt \right) p(s)\, ds \\
    &= \int_0^\infty\int_0^\infty \langle \nabla F(\x+t\Delta), \Delta\rangle \one\{t\le s\}p(s) \, dtds \\
    &= \int_0^\infty \left(\int_t^\infty p(s)\, ds\right) \langle \nabla F(\x+t\Delta),\Delta\rangle \, dt \\
    &\stackrel{(ii)}{=} \int_0^\infty \frac{p(t)}{\lambda} \langle \nabla F(\x+t\Delta),\Delta\rangle \, dt \\
    &= \frac{1}{\lambda} \Ex_s [\langle\nabla F(\x+s\Delta), \Delta\rangle].
\end{align*}
Here the (i) applies fundamental theorem of calculus on $g(s)=F(\x+s\Delta)-F(\x)$ with $g'(s)=\langle \nabla F(\x+s\Delta),\Delta\rangle$ and (ii) uses the following identity for exponential distribution: $\int_t^\infty p(s) ds = \exp(-\lambda t) = p(t)/\lambda$.
\end{proof}

\section{Proof of Theorem \ref{thm:o2nc}}
\label{app:alg}

We restate the formal version of Theorem \ref{thm:o2nc} as follows.
Recall that $S_n = \{\x_t\}_{t\in[n]}$, $\y_n\sim P_n$ where $P_n(\x_t) = \beta^{n-t}\cdot \frac{1-\beta}{1-\beta^n}$, and $\overline \x_n = \sum_{t=1}^n \beta^{n-t}\x_t\cdot \frac{1-\beta}{1-\beta^n}$.

\begin{theorem}
\label{thm:o2nc-formal}
Suppose $F$ is $G$-Lipschitz, $F(\x_0)-\inf F(\x) \le F^*$, and the stochastic gradients satisfy $\E[\nabla f(\x,z)\,|\,\x] = \nabla F(\x)$ and $\E\|\nabla F(\x)-\nabla f(\x,z)\|^2\le \sigma^2$ for all $\x,z$. Define the comparator $\u_n$ and the regret $\regret_n(\u)$ of the regularized losses $\ell_t$ as follows:
\begin{align*}
    & \u_n = -D \cdot \frac{\sum_{t=1}^n \beta^{n-t}\nabla F(\x_t)}{\|\sum_{t=1}^n \beta^{n-t}\nabla F(\x_t)\|},
    & \regret_n(\u) 
    = \sum_{t=1}^n \langle \beta^{-t}\g_t, \Delta_t - \u \rangle + \calR_t(\Delta_t) - \calR_t(\u).
\end{align*}
Also define the regularizor as $\calR_t(\w) = \frac{\mu_t}{2}\|\w\|^2$ where $\mu_t = \mu\beta^{-t}$, $\mu=\frac{24cD}{\alpha^2}$ and $\alpha=1-\beta$. Then
\begin{align*}
    \E \|\nabla F(\overline\x)\|_c
    &\le \frac{F^*}{DN} + \frac{2G+\sigma}{\alpha N} + \sigma\sqrt{\alpha} + \frac{12c D^2}{\alpha^2} 
    + \frac{1}{DN}\left( \beta^{N+1}\E\regret_N(\u_N) + \alpha \sum_{n=1}^N \beta^n \E\regret_n(\u_n). \right).
\end{align*}
\end{theorem}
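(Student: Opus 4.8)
The plan is to bound each $\|\nabla F(\overline\x_n)\|_c$ separately, since $\overline\x$ is uniform over $\{\overline\x_n\}_{n\in[N]}$ and hence $\E\|\nabla F(\overline\x)\|_c=\frac1N\sum_{n=1}^N\E\|\nabla F(\overline\x_n)\|_c$. For each $n$ I would use the pair $(S_n,\y_n)$ from Section~\ref{sec:alg} — $S_n=\{\x_t\}_{t\le n}$ and $\P\{\y_n=\x_t\}=p_{n,t}:=\beta^{n-t}\tfrac{1-\beta}{1-\beta^n}$ — inside Definition~\ref{def:stationary-point}. This choice already satisfies $\E[\y_n]=\overline\x_n$, so Definition~\ref{def:stationary-point} gives $\|\nabla F(\overline\x_n)\|_c\le\|\Ex_{\y_n}\nabla F(\y_n)\|+c\,\Ex_{\y_n}\|\y_n-\overline\x_n\|^2$, and the whole bound splits into a variance part $\frac cN\sum_n\E\,\Ex_{\y_n}\|\y_n-\overline\x_n\|^2$ and a gradient part $\frac1N\sum_n\E\,\|\Ex_{\y_n}\nabla F(\y_n)\|$. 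I would also assume $\inf F=0$ without loss of generality, so that $F\ge0$ and $F(\x_0)\le F^*$.

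For the variance part I would first take the expectation over the scalings $s_n$ and invoke Lemma~\ref{lem:variance-regularizer}, which bounds it by $\tfrac{12c}{(1-\beta)^2N}\E\sum_t\|\Delta_t\|^2$; then, since $\mu_t=\mu\beta^{-t}$ with $\mu=24cD/\alpha^2$, each term satisfies $\|\Delta_t\|^2\le\beta^{-t}\|\Delta_t\|^2=\tfrac2\mu\calR_t(\Delta_t)$, so the variance part is at most $\tfrac1{DN}\E\sum_t\calR_t(\Delta_t)$. The point of this particular choice of $\mu$ is that this term is exactly cancelled by a $-\sum_t\calR_t(\Delta_t)$ term released by the regret in the gradient part.

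For the gradient part, note $\Ex_{\y_n}\nabla F(\y_n)=\sum_{t\le n}p_{n,t}\nabla F(\x_t)$, so by the definition of $\u_n$, $\|\Ex_{\y_n}\nabla F(\y_n)\|=\tfrac1D\sum_{t\le n}p_{n,t}\langle\nabla F(\x_t),-\u_n\rangle$. Writing $\g_t=\nabla F(\x_t)+\xi_t$ with $\xi_t:=\g_t-\nabla F(\x_t)$ and splitting $-\u_n=(\Delta_t-\u_n)-\Delta_t$, this becomes a sum of three pieces, which I would treat as follows. (i) The piece $\tfrac1D\sum_{t\le n}p_{n,t}\langle\g_t,\Delta_t-\u_n\rangle=\tfrac{\alpha\beta^n}{D(1-\beta^n)}\sum_{t\le n}\langle\beta^{-t}\g_t,\Delta_t-\u_n\rangle$ is rewritten via $\sum_{t\le n}\langle\beta^{-t}\g_t,\Delta_t-\u_n\rangle=\regret_n(\u_n)-\sum_{t\le n}(\calR_t(\Delta_t)-\calR_t(\u_n))$; the $\calR_t(\u_n)$ pieces (using $\|\u_n\|=D$) give the $\tfrac{cD^2}{\alpha^2}$ term, the $-\calR_t(\Delta_t)$ pieces absorb the variance part, and the $\regret_n(\u_n)$ pieces are reorganized, by summation by parts over $n$ and the identity $\beta^{N+1}+\alpha\sum_n\beta^n=\beta$, into $\tfrac1{DN}\big(\beta^{N+1}\regret_N(\u_N)+\alpha\sum_n\beta^n\regret_n(\u_n)\big)$. (ii) For the piece $-\tfrac1D\sum_{t\le n}p_{n,t}\langle\g_t,\Delta_t\rangle$, I would apply Lemma~\ref{lem:exp-scaling} with $\lambda=1$, conditionally on $(\x_{t-1},\Delta_t)$, to the update $\x_t=\x_{t-1}+s_t\Delta_t$ to get $\E\langle\nabla F(\x_t),\Delta_t\rangle=\E[F(\x_t)-F(\x_{t-1})]$, together with $\E\langle\xi_t,\Delta_t\rangle=0$ (martingale difference); after changing the order of summation this becomes $-\sum_t w_t\,\E[F(\x_t)-F(\x_{t-1})]$ with $w_t=\sum_{n\ge t}p_{n,t}$ non-increasing in $t$, which summation by parts plus $F\ge0,\ F(\x_0)\le F^*$ bounds by $O(F^*)$, yielding the $\tfrac{F^*}{DN}$ term (and a boundary term contributes the $\tfrac{G}{\alpha N}$ piece through $\|\nabla F(\x_t)\|\le G$). (iii) For the noise piece $\tfrac1D\sum_{t\le n}p_{n,t}\langle\xi_t,\u_n\rangle$, I bound it by $\|\sum_{t\le n}p_{n,t}\xi_t\|$ and use that $(\xi_t)$ is a martingale difference sequence with deterministic weights, so $\E\|\sum_{t\le n}p_{n,t}\xi_t\|^2=\sum_t p_{n,t}^2\E\|\xi_t\|^2\le\sigma^2\max_t p_{n,t}=\tfrac{\sigma^2\alpha}{1-\beta^n}$; averaging $\sigma\sqrt{\alpha/(1-\beta^n)}$ over $n$ (splitting at $n\approx1/\alpha$) gives the $\sigma\sqrt\alpha$ and $\tfrac\sigma{\alpha N}$ terms. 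Collecting (i)–(iii) and the cancellation of the variance part yields the claimed inequality.

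The hard part will be the bookkeeping of the exponential weights $p_{n,t}=\beta^{n-t}\tfrac{1-\beta}{1-\beta^n}$ under the double sum over $n$ and $t$: one must verify that all the $\tfrac1{1-\beta^n}$ factors collapse — via elementary geometric-series and summation-by-parts estimates of the same flavor as in the proof of Proposition~\ref{prop:tech-lam-t:N} — into exactly $\tfrac{F^*}{DN}$, $\tfrac{2G+\sigma}{\alpha N}$, $\sigma\sqrt\alpha$, $\tfrac{cD^2}{\alpha^2}$ and the weighted-regret term, and that the regularizer cost $\sum_t\calR_t(\Delta_t)$ released inside the regret precisely matches the variance bound of Lemma~\ref{lem:variance-regularizer} under the prescribed $\mu=24cD/\alpha^2$. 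Each estimate is routine, but tracking the constants and the order in which the cancellations occur (so that no term is double-counted) is where the care lies.
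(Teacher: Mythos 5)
Your high-level decomposition matches the paper's: use the distribution $(\y_n)$ over $S_n=\{\x_t\}_{t\le n}$ to turn $\|\nabla F(\overline\x_n)\|_c$ into a gradient part plus a variance part, handle the variance part with Lemma~\ref{lem:variance-regularizer} and cancel it against the $-\calR_t(\Delta_t)$ released by the regret, and split the gradient part into a progress piece, a regret piece, and a noise piece, using Lemma~\ref{lem:exp-scaling} for the progress piece. But there is one structural choice where your route diverges from the paper's, and it is exactly there that your plan has a gap.

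You work directly with the normalized weights $p_{n,t}=\beta^{n-t}\tfrac{1-\beta}{1-\beta^n}$ inside the double sum over $n,t$, because you start from $\|\Ex_{\y_n}\nabla F(\y_n)\|=\tfrac1D\sum_t p_{n,t}\langle\nabla F(\x_t),-\u_n\rangle$. The paper instead starts from the telescoping identity
$\sum_{n=1}^N\sum_{t=1}^n\beta^{n-t}(1-\beta)\big(F(\x_t)-F(\x_{t-1})\big)=F(\x_N)-F(\x_0)-\sum_{t=1}^N\beta^{N-t+1}\big(F(\x_t)-F(\x_{t-1})\big)$,
so its inner weights are $\beta^{n-t}(1-\beta)$, \emph{without} the $\tfrac{1}{1-\beta^n}$ normalization. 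Because $\sum_{t\le n}\beta^{n-t}(1-\beta)=1-\beta^n$, the paper's $\u_n$-piece yields $-D(1-\beta^n)\|\Ex_{\y_n}\nabla F(\y_n)\|$ rather than $-D\|\Ex_{\y_n}\nabla F(\y_n)\|$, and it recovers the target via the Lipschitz correction $-D(1-\beta^n)\|\Ex_{\y_n}\nabla F(\y_n)\|\le -D\|\Ex_{\y_n}\nabla F(\y_n)\|+DG\beta^n$; summing $\beta^n$ gives the additive $\tfrac{DG}{\alpha}$, which after dividing by $DN$ is the $\tfrac{G}{\alpha N}$ term. This one Lipschitz step is the crux that your plan omits.

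The consequence shows up in your piece (ii). Its total weight on $F(\x_t)-F(\x_{t-1})$ is $w_t=\sum_{n\ge t}p_{n,t}$, and in particular $w_1=\sum_{n=1}^N\tfrac{\beta^{n-1}(1-\beta)}{1-\beta^n}=\sum_{n=1}^N\tfrac{\beta^{n-1}}{1+\beta+\cdots+\beta^{n-1}}$. For $\beta=1-\alpha$ with $\alpha$ small this behaves like $\sum_{n\lesssim 1/\alpha}\tfrac1n=\Theta(\log(1/\alpha))$, so your summation-by-parts estimate gives $\le w_1 F^*=\Theta(F^*\log(1/\alpha))$, not $O(F^*)$. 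Dividing by $DN$ produces $\tfrac{F^*\log(1/\alpha)}{DN}$ rather than $\tfrac{F^*}{DN}$; since the theorem then picks $D$ so that $\tfrac{F^*}{DN}=(G+\sigma)\sqrt\alpha$ is a leading-order term, this extra $\log(1/\alpha)$ is not subdominant and the claimed inequality would not follow. (With the paper's weights, the telescoping weight is identically $1$, so the analogous step gives $F(\x_0)-F(\x_N)\le F^*$ exactly.) The same $\tfrac{1}{1-\beta^n}$ factor also leaks into your piece (i): $\sum_{t\le n}p_{n,t}\langle\g_t,\Delta_t-\u_n\rangle=\tfrac{\alpha\beta^n}{1-\beta^n}\sum_{t\le n}\langle\beta^{-t}\g_t,\Delta_t-\u_n\rangle$, so the weight you obtain on $\regret_n(\u_n)$ is $\tfrac{\alpha\beta^n}{1-\beta^n}$, not $\alpha\beta^n$; these do not collapse to the statement's $\alpha\sum_n\beta^n\regret_n(\u_n)+\beta^{N+1}\regret_N(\u_N)$ by any summation-by-parts identity, and the mismatch is again of order $\log(1/\alpha)$. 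To fix the plan, replace $p_{n,t}$ everywhere by $\beta^{n-t}(1-\beta)$, accept that the $\u_n$-piece then produces $(1-\beta^n)\|\Ex_{\y_n}\nabla F(\y_n)\|$, and pay the $\beta^n G$ Lipschitz correction per $n$ — that is precisely where the $\tfrac{G}{\alpha N}$ in the theorem comes from.
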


\begin{proof}
We start with the change of summation. Note that
\begin{align*}
    \sum_{n=1}^N \sum_{t=1}^n \beta^{n-t}(1-\beta) (F(\x_t) - F(\x_{t-1})) 
    &= \sum_{t=1}^N \left(\sum_{n=t}^N \beta^{n-t}\right) (1-\beta) (F(\x_t) - F(\x_{t-1})) \\
    &= \sum_{t=1}^N (1-\beta^{N-t+1}) (F(\x_t) - F(\x_{t-1})) \\
    &= F(\x_N) - F(\x_0) - \sum_{t=1}^N \beta^{N-t+1}(F(\x_t)-F(\x_{t-1})).
\end{align*}
Upon rearranging and applying the assumption that $F(\x_0)-F(\x_N) \le F(\x_0)-\inf F(\x) \le F^*$, we have
\begin{align}
    -F^* &\le \E\sum_{n=1}^N\sum_{t=1}^n \beta^{n-t}(1-\beta)(F(\x_t)-F(\x_{t-1})) + \E\sum_{t=1}^N \beta^{N-t+1} (F(\x_t) - F(\x_{t-1})).
    \label{eq:exp-1}
\end{align}

First, we bound the first summation in \eqref{eq:exp-1}. Denote $\calF_t$ as the $\sigma$-algebra of $\x_t$. Note that $\Delta_t\in \calF_t$ and $z_t\not\in\calF_t$, so by the assumption that $\E[\nabla f(\x,z) \,|\, \x] = \nabla F(\x)$,
\begin{align*}
    \E[\g_t \,| \,\calF_t] = \E[\nabla f(\x_t, z_t) \,| \,\calF_t] = \nabla F(\x_t) 
    \implies &\E\langle \nabla F(\x_t),\Delta_t\rangle = \E\langle \g_t,\Delta_t\rangle.
\end{align*}
By Lemma \ref{lem:exp-scaling}, $\E[F(\x_t)-F(\x_{t-1})] = \E\langle \nabla F(\x_t), \Delta_t\rangle$. Upon adding and subtracting, we have
\begin{align*}
    \E[F(\x_t) - F(\x_{t-1})]
    &= \E\langle \nabla F(\x_t) - \g_t + \g_t, \Delta_t - \u_n + \u_n \rangle \\
    &= \E\left[ \langle \nabla F(\x_t), \u_n\rangle\rangle + \langle \nabla F(\x_t)-\g_t, -\u_n\rangle + \langle \g_t,\Delta_t-\u_n\rangle \right].
\end{align*}
Consequently, the first summation in \eqref{eq:exp-1} can be written as
\begin{align}
    \E\sum_{n=1}^N\sum_{t=1}^n \beta^{n-t}(1-\beta) \left( \langle \nabla F(\x_t),\u_n\rangle + \langle \nabla F(\x_t)-\g_t, -\u_n\rangle + \langle \g_t,\Delta_t-\u_n\rangle \right).
    \label{eq:exp-2}
\end{align}
For the first term, upon substituting the definition of $\u_n$, we have
\begin{align*}
    \sum_{t=1}^n \beta^{n-t}(1-\beta) \langle \nabla F(\x_t), \u_n\rangle 
    &= (1-\beta) \left\langle \sum_{t=1}^n \beta^{n-t}\nabla F(\x_t), -D \frac{\sum_{t=1}^n \beta^{n-t} \nabla F(\x_t)}{\|\sum_{t=1}^n \beta^{n-t} \nabla F(\x_t)\|} \right\rangle \\
    &= (1-\beta^n)\cdot -D\left\|\frac{\sum_{t=1}^n \beta^{n-t} \nabla F(\x_t)}{\sum_{t=1}^n \beta^{n-t}}\right\| \\
    &= -D (1-\beta^n) \|\Ex_{\y_n} \nabla F(\y_n) \| 
    \intertext{Since $\|\nabla F(\x_t)\|\le G$ for all $t$, $\|\E_{\y_n} \nabla F(\y_n)\| \le G$ as well. Therefore, we have}
    &\le -D\| \Ex_{\y_n}\nabla F(\y_n)\| + DG\beta^n.
\end{align*}
Since $\beta<1$, $\sum_{n=1}^N \beta^n \le \frac{1}{1-\beta}$. Therefore, upon summing over $n$, the first term in \eqref{eq:exp-2} becomes
\begin{align}
    \E\sum_{n=1}^N\sum_{t=1}^n \beta^{n-t}(1-\beta) \langle \nabla F(\x_t), \u_n\rangle 
    \le \left(-D\sum_{n=1}^N \E \|\Ex_{\y_n} \nabla F(\y_n)\| \right) + \frac{DG}{1-\beta}.
    \label{eq:exp-2a}
\end{align}
For the second term, by Cauchy-Schwarz inequality,
\begin{align*}
    \E \sum_{t=1}^n \beta^{n-t} \langle \nabla F(\x_t)-\g_t, -\u_n\rangle 
    &\le \sqrt{\E \left\| \sum_{t=1}^n \beta^{n-t}(\nabla F(\x_t)-\g_t) \right\|^2 \E\|\u_n\|^2}.
\end{align*}
Since $\E[\nabla F(\x_t)-\g_t \,|\, \calF_t] = 0$, by martingale identity and the assumption that $\E\|\nabla F(\x)-\nabla f(\x,z)\|^2 \le\sigma^2$,
\begin{align*}
    \E \left\| \sum_{t=1}^n \beta^{n-t}(\nabla F(\x_t)-\g_t) \right\|^2
    = \sum_{t=1}^n \E\|\beta^{n-t}(\nabla F(\x_t)-\g_t)\|^2 
    &\le \sum_{t=1}^n \sigma^2 \beta^{2(n-t)}
    \le \frac{\sigma^2}{1-\beta^2}.
\end{align*}
Upon substituting $\|\u_n\|=D$ and $\frac{1}{1-\beta^2}\le \frac{1}{1-\beta}$, the second term in \eqref{eq:exp-2} becomes
\begin{align}
    \E\sum_{n=1}^N\sum_{t=1}^n \beta^{n-t}(1-\beta) \langle \nabla F(\x_t) - \g_t, -\u_n\rangle 
    &\le \sum_{n=1}^N (1-\beta) \cdot \frac{\sigma D}{\sqrt{1-\beta^2}}
    \le \sigma DN\sqrt{1-\beta}.
    \label{eq:exp-2b}
\end{align}
For the third term, upon adding and subtracting $\calR_t$ and substituting the definition of $\regret_n(\u)$, we have
\begin{align}
    &\E\sum_{n=1}^N\sum_{t=1}^n \beta^{n-t}(1-\beta) \langle \g_t,\Delta_t-\u_n\rangle \notag\\
    &= \E\sum_{n=1}^N\sum_{t=1}^n (1-\beta)\beta^n \left( \langle \beta^{-t}\g_t,\Delta_t-\u_n\rangle + \calR_t(\Delta_t) - \calR_t(\u_n) - \calR_t(\Delta_t)+\calR_t(\u_n) \right) \notag\\
    &= \E\sum_{n=1}^N (1-\beta)\beta^n \regret_n(\u_n) + \E\sum_{n=1}^N\sum_{t=1}^n (1-\beta)\beta^n ( -\calR_t(\Delta_t) + \calR_t(\u_n)).
    \label{eq:exp-2c}
\end{align}
Upon substituting \eqref{eq:exp-2a}, \eqref{eq:exp-2b} and \eqref{eq:exp-2c} into \eqref{eq:exp-2}, the first summation in \eqref{eq:exp-1} becomes
\begin{align}
    &\sum_{n=1}^N\sum_{t=1}^n \beta^{n-t}(1-\beta)(F(\x_t)-F(\x_{t-1})) \notag\\
    &\le \left(-D\sum_{n=1}^N \E \|\Ex_{\y_n} \nabla F(\y_n)\| \right) + \frac{DG}{1-\beta} + \sigma DN\sqrt{1-\beta} \notag\\
    &\quad + \E\sum_{n=1}^N (1-\beta)\beta^n \regret_n(\u_n) + \E\sum_{n=1}^N\sum_{t=1}^n (1-\beta)\beta^n ( -\calR_t(\Delta_t) + \calR_t(\u_n)).
    \label{eq:exp-2d}
\end{align}

Next, we consider the second summation in \eqref{eq:exp-1}. Since $\E\|\g_t\| \le \E\|\nabla F(\x_t)\| + \E\|\nabla F(\x_t)-\g_t\| \le G+\sigma$ and $\E\langle \nabla F(\x_t),\Delta_t\rangle = \E\langle \g_t,\Delta_t\rangle$, we have
\begin{align*}
    \E[F(\x_t) - F(\x_{t-1})]
    = \E\langle \nabla F(\x_t),\Delta_t\rangle 
    &= \E\langle \g_t, \Delta_t - \u_N\rangle + \E\langle \g_t, \u_N\rangle \\
    &\le \E\langle \g_t,\Delta_t-\u_N\rangle + D(G+\sigma).
\end{align*}
Following the same argument in \eqref{eq:exp-2c} by adding and subtracting $\calR_t$, the second summation becomes
\begin{align}
    \E\sum_{t=1}^N \beta^{N-t+1} (F(\x_t) - F(\x_{t-1})) 
    &= \E\sum_{t=1}^N \beta^{N+1} \langle \beta^{-t}\g_t,\Delta_t-\u_N\rangle + \beta^{N-t+1} D(G+\sigma) \notag\\
    &\le \beta^{N+1} \E\regret_N(\u_N) + \frac{D(G+\sigma)}{1-\beta} + \E\sum_{t=1}^N \beta^{N+1} (-\calR_t(\Delta_t) + \calR_t(\u_N)).
    \label{eq:exp-3}
\end{align}
Combining \eqref{eq:exp-2d} and \eqref{eq:exp-3} into \eqref{eq:exp-1} gives
\begin{align}
    -F^*
    &\le \left(-D\sum_{n=1}^N \E \|\Ex_{\y_n} \nabla F(\y_n)\| \right) + \frac{DG}{1-\beta} + \sigma DN\sqrt{1-\beta} \notag\\
    &\quad + \E\sum_{n=1}^N (1-\beta)\beta^n \regret_n(\u_n) + \E\sum_{n=1}^N\sum_{t=1}^n (1-\beta)\beta^n ( -\calR_t(\Delta_t) + \calR_t(\u_N)) \notag\\
    &\quad + \beta^{N+1} \E\regret_N(\u_N) + \frac{D(G+\sigma)}{1-\beta} + \E\sum_{t=1}^N \beta^{N+1} (-\calR_t(\Delta_t) + \calR_t(\u_N)).
    \label{eq:exp-4}
\end{align}
As the final step, we simplify the terms involving $\calR_t$. 
Recall that $\calR_t(\w) = \frac{\mu_t}{2}\|\w\|^2$, so $\calR_t(\u_n)=\frac{\mu_t}{2}D^2$ is independent of $n$. Hence, by change of summation,
\begin{align*}
    &\E\sum_{n=1}^N\sum_{t=1}^n (1-\beta)\beta^n ( -\calR_t(\Delta_t) + \calR_t(\u_n)) + \E\sum_{t=1}^N \beta^{N+1} (-\calR_t(\Delta_t) + \calR_t(\u_N)) \\
    &= \E\sum_{t=1}^N \underbrace{\left(\sum_{n=t}^N \beta^n\right) (1-\beta)}_{=\beta^t - \beta^{N+1}} \left( -\frac{\mu_t}{2}\|\Delta_t\|^2 + \frac{\mu_t}{2}D^2\right) + \E\sum_{t=1}^N \beta^{N+1} \left( -\frac{\mu_t}{2}\|\Delta_t\|^2 + \frac{\mu_t}{2}D^2\right) \\
    &= \E\sum_{t=1}^N \beta^t \left( -\frac{\mu_t}{2}\|\Delta_t\|^2 + \frac{\mu_t}{2}D^2\right) 
    \intertext{Recall Lemma \ref{lem:variance-regularizer} that $\E\sum_{n=1}^N \E_{\y_n}\|\y_n-\overline\x_n\|^2 \le \E\sum_{t=1}^N\frac{12}{(1-\beta)^2}\|\Delta_t\|^2$. Upon substituting $\mu_t=\frac{24cD^2}{(1-\beta)^2}\beta^{-t}$, we have}
    &= \E\sum_{t=1}^N \left( -\frac{12cD}{(1-\beta)^2} \|\Delta_t\|^2 + \frac{12c D^3}{(1-\beta)^2}\right) \\
    &\le \left(-cD \E\sum_{n=1}^N\E_{\y_n}\|\y_n-\overline\x_n\|^2\right) + \frac{12cD^3N}{(1-\beta)^2}.
    \label{eq:exp-4a}
\end{align*}
Substituting this back into \eqref{eq:exp-4} with $\alpha=1-\beta$, we have
\begin{align*}
    -F^* 
    &\le - D \E\left[ \sum_{n=1}^N \|\Ex_{\y_n} \nabla F(\y_n)\| + c\cdot \Ex_{\y_n}\|\y_n-\overline\x_n\|^2 \right] + \frac{DG}{\alpha} + \sigma DN\sqrt{\alpha} + \frac{D(G+\sigma)}{\alpha} + \frac{12cD^3N}{\alpha^2} \\
    &\quad + \beta^{N+1}\E\regret_N(\u_N) + \alpha \sum_{n=1}^N \beta^n \E\regret_n(\u_n).
\end{align*}
By definition of $\|\nabla F(\cdot)\|_c$ defined in Definition \ref{def:stationary-point}, $\|\nabla F(\overline\x_n)\|_c \le \|\Ex_{\y_n} \nabla F(\y_n)\| + c\cdot \Ex_{\y_n}\|\y_n-\overline\x_n\|^2$. Moreover, since $\overline\x$ is uniform over $\overline\x_n$, $\E\|\nabla F(\overline\x)\|_{2,c} = \frac{1}{N}\sum_{n=1}^N \E\|\nabla F(\overline\x_n)\|_{2,c}$ We then conclude the proof by rearranging the equation and dividing both sides by $DN$.
\end{proof}
\section{Proofs in Section \ref{sec:sgdm}}
\label{app:sgdm}

\subsection{Proof of Theorem \ref{thm:OGD-brief}}

Only in this subsection, to be more consistent with the notations in online learning literature, we use $\w$ for weights instead of $\Delta$ as we used in the main text.

To prove the regret bound, we first provide a one-step inequality of OMD with composite loss. Given a convex and continuously differentiable function $\psi$, recall the Bregman divergence of $\psi$ is defined as
\begin{equation*}
    D_\psi(\x,\y) = \psi(\x) - \psi(\y) - \langle \nabla\psi(\y), \y-\x\rangle.
\end{equation*}
Note that $\nabla_\x D_\psi(\x,\y) = \nabla \psi(\x) - \nabla\psi(\y)$. Moreover, as proved in \cite{chen1993convergence}, $D_\psi$ satisfies the following three-point identity:
\begin{align*}
    D_\psi(\z,\x) + D_\psi(\x,\y) - D_\psi(\z,\y) = \langle \nabla\psi(\y) - \nabla\psi(\x), \z-\x\rangle.
\end{align*}

\begin{lemma}
\label{lem:OMD-one-step}
Let $\psi, \phi$ be convex, and define $\w_{t+1} = \argmin_\w \langle \tilde\g_t,\w\rangle + D_\psi(\w,\w_t) + \phi(\w)$. Then for any $\u$,
\begin{align*}
    \langle \tilde\g_t, \w_t-\u\rangle 
    &\le \langle \tilde\g_t, \w_t-\w_{t+1}\rangle + D_\psi(\u,\w_t) - D_\psi(\u,\w_{t+1}) - D_\psi(\w_{t+1},\w_t) + \phi(\u) - \phi(\w_{t+1}).
\end{align*}
\end{lemma}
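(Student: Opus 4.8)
The plan is to derive the inequality from the first-order optimality condition for the proximal-type update defining $\w_{t+1}$ and then translate the resulting linear inequality into Bregman-divergence language via the stated three-point identity. Since $\w_{t+1}$ minimizes the convex function $h(\w) = \langle\tilde\g_t,\w\rangle + D_\psi(\w,\w_t) + \phi(\w)$, we have $0\in\partial h(\w_{t+1})$. The first two summands of $h$ are differentiable in $\w$ with gradient $\tilde\g_t + \nabla\psi(\w_{t+1}) - \nabla\psi(\w_t)$ (using $\nabla_\w D_\psi(\w,\w_t) = \nabla\psi(\w)-\nabla\psi(\w_t)$), so there exists a subgradient $v\in\partial\phi(\w_{t+1})$ with $\tilde\g_t + \nabla\psi(\w_{t+1}) - \nabla\psi(\w_t) + v = 0$. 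Pairing this identity with the vector $\w_{t+1}-\u$ gives
\[
\langle\tilde\g_t,\w_{t+1}-\u\rangle = \langle\nabla\psi(\w_t)-\nabla\psi(\w_{t+1}),\,\w_{t+1}-\u\rangle + \langle v,\,\u-\w_{t+1}\rangle .
\]

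Next I would bound the two terms on the right separately. For the first term, apply the three-point identity $D_\psi(\z,\x)+D_\psi(\x,\y)-D_\psi(\z,\y)=\langle\nabla\psi(\y)-\nabla\psi(\x),\z-\x\rangle$ with $(\z,\x,\y)=(\u,\w_{t+1},\w_t)$, which yields exactly $\langle\nabla\psi(\w_t)-\nabla\psi(\w_{t+1}),\w_{t+1}-\u\rangle = D_\psi(\u,\w_t) - D_\psi(\u,\w_{t+1}) - D_\psi(\w_{t+1},\w_t)$. For the second term, convexity of $\phi$ together with $v\in\partial\phi(\w_{t+1})$ gives $\langle v,\u-\w_{t+1}\rangle\le\phi(\u)-\phi(\w_{t+1})$. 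Substituting both into the displayed equality produces
\[
\langle\tilde\g_t,\w_{t+1}-\u\rangle \le D_\psi(\u,\w_t) - D_\psi(\u,\w_{t+1}) - D_\psi(\w_{t+1},\w_t) + \phi(\u) - \phi(\w_{t+1}),
\]
and adding $\langle\tilde\g_t,\w_t-\w_{t+1}\rangle$ to both sides turns the left-hand side into $\langle\tilde\g_t,\w_t-\u\rangle$, which is precisely the claimed bound.

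The only step requiring care is the optimality condition: $\phi$ (and, in the paper's instantiation, $\calR_{t+1}$) need not be differentiable, so one must work with $0\in\partial h(\w_{t+1})$ and invoke the sum rule for subdifferentials, which is valid here because the non-$\phi$ part of $h$ is smooth. Everything else is bookkeeping — matching the sign convention of $\nabla_\w D_\psi(\cdot,\w_t)$ and the exact orientation of the three-point identity as stated just above the lemma. If one prefers to avoid subgradients altogether, an equivalent route is to use the variational inequality $\langle\nabla_\w[\langle\tilde\g_t,\cdot\rangle + D_\psi(\cdot,\w_t)](\w_{t+1}),\,\u-\w_{t+1}\rangle + \phi(\u)-\phi(\w_{t+1})\ge 0$ (first-order condition for a minimum of a convex function, with the nonsmooth term handled by its value inequality), which leads to the same conclusion; I would present the subgradient version for brevity.
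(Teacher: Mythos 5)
Your proof is correct and follows essentially the same route as the paper's: first-order optimality of $\w_{t+1}$, split off the Bregman part via the three-point identity, and bound the $\phi$ part by convexity. The only difference is cosmetic — you work with $0\in\partial h(\w_{t+1})$ and a subgradient $v\in\partial\phi(\w_{t+1})$, which is a slightly more careful formulation than the paper's $\nabla\phi(\w_{t+1})$ (the paper implicitly assumes $\phi$ differentiable), but the argument and bookkeeping are otherwise identical.
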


\begin{proof}
Let $f(\w) = \langle \tilde\g_t,\w\rangle + D_\psi(\w,\w_t) + \phi(\w)$. Since $\psi, \phi$ are convex, so is $f$. Therefore, $\w_{t+1} = \argmin_\w f(\w)$ implies that for all $\u$,
\begin{align*}
    0 
    &\le \langle \nabla f(\w_{t+1}), \u-\w_{t+1}\rangle \\
    &= \langle \tilde\g_t + \nabla\psi(\w_{t+1}) - \nabla\psi(\w_t) + \nabla\phi(\w_{t+1}), \u-\w_{t+1}\rangle \\
    &= \langle\tilde\g_t, \u-\w_t\rangle + \langle\tilde\g_t,\w_t-\w_{t+1}\rangle + \langle \nabla\psi(\w_{t+1})-\nabla\psi(\w_t),\u-\w_{t+1}\rangle + \langle\nabla \phi(\w_{t+1}),\u-\w_{t+1}\rangle.
\end{align*}
Since $\phi$ is convex, $\langle \phi(\w_{t+1}),\u-\w_{t+1}\rangle \le \phi(\u)-\phi(\w_{t+1})$. Moreover, by the three-point identity with $\z=\u, \x=\w_{t+1}, \y=\w_t$, we have
\begin{align*}
    \langle \nabla\psi(\w_t) - \nabla\psi(\w_{t+1}),\u-\w_{t+1}\rangle 
    &= D_\psi(\u,\w_{t+1}) + D_\psi(\w_{t+1},\w) - D_\psi(\u,\w_t).
\end{align*}
Substituting these back and rearranging the inequality then conclude the proof.
\end{proof}

We restate the formal version of Theorem \ref{thm:OGD-brief} as follows.

\begin{theorem}
\label{thm:OGD-formal}
Given a sequence of $\{\tilde\g_t\}_{t=1}^\infty$, a sequence of $\{\eta_t\}_{t=1}^\infty$ such that $0<\eta_{t+1} \le \eta_t$, and a sequence of $\{\mu_t\}_{t=1}^\infty$ such that $\mu_t\ge 0$, let $\calR_t(\w)=\frac{\mu_t}{2}\|\w\|^2$, $\phi_t(\w)=(\frac{1}{\eta_{t+1}}-\frac{1}{\eta_t})\|\w\|^2$, $\w_1=0$ and $\w_t$ updated by
\begin{equation*}
    \w_{t+1} = \argmin_\w \langle \tilde\g_t,\w\rangle + \frac{1}{2\eta_t}\|\w-\w_t\|^2 + \phi_t(\w) + \calR_{t+1}(\w).
\end{equation*}
Then for any $n\in \NN$,
\begin{equation*}
    \sum_{t=1}^n \langle \tilde\g_t, \w_t-\u\rangle + \calR_t(\w_t) - \calR_t(\u) 
    \le \left(\frac{2}{\eta_{n+1}} + \frac{\mu_{n+1}}{2}\right)\|\u\|^2 + \frac{1}{2}\sum_{t=1}^n \eta_t\|\tilde\g_t\|^2.
\end{equation*}
\end{theorem}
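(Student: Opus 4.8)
The plan is to derive the regret bound from the per-step inequality of Lemma~\ref{lem:OMD-one-step}, instantiated with the quadratic mirror map $\psi_t(\w)=\frac{1}{2\eta_t}\|\w\|^2$ (so $D_{\psi_t}(\w,\w')=\frac{1}{2\eta_t}\|\w-\w'\|^2$) and with the ``$\phi$'' of that lemma taken to be the \emph{entire} composite regularizer added at step $t$, namely $\phi_t+\calR_{t+1}$. This is legitimate: the update in the theorem is exactly $\w_{t+1}=\argmin_\w \langle\tilde\g_t,\w\rangle+D_{\psi_t}(\w,\w_t)+(\phi_t+\calR_{t+1})(\w)$, and $\phi_t+\calR_{t+1}$ is convex. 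Applying the lemma and then Young's inequality $\langle\tilde\g_t,\w_t-\w_{t+1}\rangle\le\frac{\eta_t}{2}\|\tilde\g_t\|^2+\frac{1}{2\eta_t}\|\w_t-\w_{t+1}\|^2$ to absorb $-D_{\psi_t}(\w_{t+1},\w_t)$ gives, for each $t$,
\begin{align*}
\langle\tilde\g_t,\w_t-\u\rangle &\le \tfrac{\eta_t}{2}\|\tilde\g_t\|^2 + \tfrac{1}{2\eta_t}\|\u-\w_t\|^2 - \tfrac{1}{2\eta_t}\|\u-\w_{t+1}\|^2 \\
&\quad + \phi_t(\u)-\phi_t(\w_{t+1}) + \calR_{t+1}(\u)-\calR_{t+1}(\w_{t+1}).
\end{align*}

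Next I would sum over $t=1,\dots,n$. The gradient terms give $\frac12\sum_t\eta_t\|\tilde\g_t\|^2$ directly. For the ``moving'' Bregman terms, reindexing the negative half and using $\w_1=0$ yields
\begin{align*}
&\sum_{t=1}^n\Bigl(\tfrac{1}{2\eta_t}\|\u-\w_t\|^2 - \tfrac{1}{2\eta_t}\|\u-\w_{t+1}\|^2\Bigr) \\
&\quad = \tfrac{1}{2\eta_1}\|\u\|^2 + \sum_{t=2}^n\Bigl(\tfrac{1}{2\eta_t}-\tfrac{1}{2\eta_{t-1}}\Bigr)\|\u-\w_t\|^2 - \tfrac{1}{2\eta_n}\|\u-\w_{n+1}\|^2.
\end{align*}
I would drop the last term (nonpositive since $\eta$ is nonincreasing) and bound $\|\u-\w_t\|^2\le 2\|\u\|^2+2\|\w_t\|^2$. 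The $\|\u\|^2$ contributions telescope; combined with $\sum_t\phi_t(\u)=(\tfrac{1}{\eta_{n+1}}-\tfrac{1}{\eta_1})\|\u\|^2$, the total coefficient of $\|\u\|^2$ is $\tfrac{1}{\eta_n}+\tfrac{1}{\eta_{n+1}}-\tfrac{3}{2\eta_1}\le\tfrac{2}{\eta_{n+1}}$, using $\eta_n\ge\eta_{n+1}$ and $\eta_1>0$.

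The crucial cancellation — and the step I expect to need the most care — is the $\|\w_t\|^2$ part: $\sum_{t=2}^n(\tfrac{1}{\eta_t}-\tfrac{1}{\eta_{t-1}})\|\w_t\|^2$ would be uncontrollable in the unconstrained domain, but $-\sum_t\phi_t(\w_{t+1})=-\sum_{s=2}^{n+1}(\tfrac{1}{\eta_s}-\tfrac{1}{\eta_{s-1}})\|\w_s\|^2$ cancels it term by term, leaving only $-(\tfrac{1}{\eta_{n+1}}-\tfrac{1}{\eta_n})\|\w_{n+1}\|^2\le0$ — this is exactly why the $\phi_t$ regularizer is chosen with that coefficient. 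Finally, for the composite-loss bookkeeping: to recover the left-hand side $\sum_{t=1}^n\langle\tilde\g_t,\w_t-\u\rangle+\calR_t(\w_t)-\calR_t(\u)$ claimed in the theorem, I would add $\sum_{t=1}^n(\calR_t(\w_t)-\calR_t(\u))$ to both sides and combine it with the contribution $\sum_{t=1}^n(\calR_{t+1}(\u)-\calR_{t+1}(\w_{t+1}))=\sum_{s=2}^{n+1}(\calR_s(\u)-\calR_s(\w_s))$ from the per-step bound; the index-$2$-through-$n$ terms cancel, $\calR_1(\w_1)=0$ (as $\w_1=0$), and $-\calR_1(\u)\le0$, $-\calR_{n+1}(\w_{n+1})\le0$, so only $\calR_{n+1}(\u)=\tfrac{\mu_{n+1}}{2}\|\u\|^2$ survives. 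Collecting the $\tfrac{2}{\eta_{n+1}}\|\u\|^2$ term, the $\tfrac{\mu_{n+1}}{2}\|\u\|^2$ term, and $\tfrac12\sum_t\eta_t\|\tilde\g_t\|^2$ gives exactly the stated bound.
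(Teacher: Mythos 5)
Your proof is correct and follows essentially the same route as the paper: apply Lemma~\ref{lem:OMD-one-step} with $\psi_t(\w)=\frac{1}{2\eta_t}\|\w\|^2$ and composite term $\phi_t+\calR_{t+1}$, absorb $-D_{\psi_t}(\w_{t+1},\w_t)$ via Young's inequality, and use the bound $\|\u-\w_t\|^2\le 2\|\u\|^2+2\|\w_t\|^2$ so that the $\|\w_t\|^2$ contributions are cancelled exactly by $-\sum_t\phi_t(\w_{t+1})$. The only cosmetic difference is that you reindex the Bregman sum directly rather than inserting $D_{\psi_{t+1}}(\u,\w_{t+1})$ per step to create an explicit telescope, but this leads to the same final constant $\frac{2}{\eta_{n+1}}+\frac{\mu_{n+1}}{2}$.
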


\begin{proof}
Denote $\psi_t(\w)=\frac{1}{2\eta_t}\|\w\|^2$. Since $\psi_t,\phi_t,\calR_t$ are all convex and $D_{\psi_t}(\w,\w_t) = \frac{1}{2\eta_t}\|\w-\w_t\|^2$, Lemma \ref{lem:OMD-one-step} holds, which gives
\begin{align*}
    \langle \tilde\g_t, \w_t-\u\rangle 
    &\le \langle \tilde\g_t,\w_t - \w_{t+1}\rangle + D_{\psi_t}(\u,\w_t) - D_{\psi_t}(\u,\w_{t+1}) - D_{\psi_t}(\w_{t+1},\w_t) \\
    &\quad + \phi_t(\u) - \phi_t(\w_{t+1}) + \calR_{t+1}(\u) - \calR_{t+1}(\w_{t+1}).
\end{align*}
Equivalently,
\begin{align}
    \langle \tilde\g_t, \w_t-\u\rangle + \calR_t(\w_t) - \calR_t(\u)
    &\le \langle \tilde\g_t,\w_t - \w_{t+1}\rangle + D_{\psi_t}(\u,\w_t) - D_{\psi_t}(\u,\w_{t+1}) - D_{\psi_t}(\w_{t+1},\w_t) \notag\\
    &\quad + \phi_t(\u) - \phi_t(\w_{t+1}) + \calR_t(\w_t) - \calR_{t+1}(\w_{t+1}) + \calR_{t+1}(\u) - \calR_t(\u).
    \label{eq:ogd-vary-1}
\end{align}
By Young's inequality,
\begin{align*}
    \langle \tilde\g_t,\w_t - \w_{t+1}\rangle - D_{\psi_t}(\w_{t+1},\w_t) 
    &\le \frac{\eta_t}{2}\|\tilde\g_t\|^2 + \frac{1}{2\eta_t}\|\w_{t+1}-\w_t\|^2 - \frac{1}{2\eta_t}\|\w_{t+1}-\w_t\|^2 
    = \frac{\eta_t}{2}\|\tilde\g_t\|^2.
\end{align*}
Next, note that
\begin{align*}
    D_{\psi_t}(\u,\w_t) - D_{\psi_t}(\u,\w_{t+1}) 
    &= D_{\psi_t}(\u,\w_t) - D_{\psi_{t+1}}(\u,\w_{t+1}) + D_{\psi_{t+1}}(\u,\w_{t+1}) - D_{\psi_t}(\u,\w_{t+1}).
\end{align*}
Since $\|\u-\w_{t+1}\|^2 \le 2\|\u\|^2 + 2\|\w_{t+1}\|^2$ and $\frac{1}{\eta_{t+1}}-\frac{1}{\eta_t}\ge 0$,
\begin{align*}
    &D_{\psi_{t+1}}(\u,\w_{t+1}) - D_{\psi_t}(\u,\w_{t+1}) + \phi_t(\u)-\phi_t(\w_{t+1}) \\
    &= \left(\frac{1}{2\eta_{t+1}}-\frac{1}{2\eta_t}\right)\|\u-\w_{t+1}\|^2 + \left(\frac{1}{\eta_{t+1}}-\frac{1}{\eta_t}\right)(\|\u\|^2-\|\w_{t+1}\|^2) 
    \le \left(\frac{2}{\eta_{t+1}} - \frac{2}{\eta_t}\right)\|\u\|^2.
\end{align*}
Upon substituting back into \eqref{eq:ogd-vary-1}, we have
\begin{align*}
    \langle \tilde\g_t, \w_t-\u\rangle + \calR_t(\w_t) - \calR_t(\u)
    &\le \frac{\eta_t}{2}\|\tilde\g_t\|^2 + D_{\psi_t}(\u,\w_t) - D_{\psi_{t+1}}(\u,\w_{t+1}) + \left(\frac{2}{\eta_{t+1}} - \frac{2}{\eta_t}\right)\|\u\|^2 \\
    &\quad + \calR_t(\w_t) - \calR_{t+1}(\w_{t+1}) + \calR_{t+1}(\u) - \calR_t(\u).
\end{align*}
Upon telescoping this one-step inequality, we have
\begin{align*}
    &\sum_{t=1}^n \langle \tilde\g_t, \w_t-\u\rangle + \calR_t(\w_t) - \calR_t(\u) \\
    &\le \left( \sum_{t=1}^n \frac{\eta_t}{2}\|\tilde\g_t\|^2 \right) + D_{\psi_1}(\u,\w_1) - D_{\psi_{n+1}}(\u,\w_{n+1}) + \left(\frac{2}{\eta_{n+1}} - \frac{2}{\eta_1}\right)\|\u\|^2 \\
    &\quad + \calR_1(\w_1) - \calR_{n+1}(\w_{n+1}) + \calR_{n+1}(\u) - \calR_1(\u).
\end{align*}
We then conclude the proof by using $\w_1=0$, $D_{\psi_t}(\u,\w)=\frac{1}{2\eta_t}\|\u-\w\|^2$ and $\calR_n(\w)=\frac{\mu_t}{2}\|\w\|^2$ to simplify
\begin{align*}
    & D_{\psi_1}(\u,\w_1) - D_{\psi_{n+1}}(\u,\w_{n+1}) + \left(\frac{2}{\eta_{n+1}} - \frac{2}{\eta_1}\right)\|\u\|^2 \\
    &\le \frac{1}{2\eta_1}\|\u\|^2 + \left(\frac{2}{\eta_{n+1}} - \frac{2}{\eta_1}\right)\|\u\|^2 \le \frac{2}{\eta_{n+1}}\|\u\|^2
\end{align*}
and $\calR_1(\w_1) - \calR_{n+1}(\w_{n+1}) + \calR_{n+1}(\u) - \calR_1(\u) \le \calR_{n+1}(\u) + \calR_1(\w_1) = \frac{\mu_{n+1}}{2}\|\u\|^2$.
\end{proof}

\subsection{Proof of Theorem \ref{thm:sgdm}}

\SGDM*

\begin{proof}
First, define $D=\frac{F^*}{(G+\sigma)\sqrt{\alpha}N}$, $\mu=\frac{24cD}{\alpha^2}$ and $\eta=\frac{2D\sqrt{\alpha}}{G+\sigma}$. Note that these definitions are equivalent to $\mu=\frac{24F^*c}{(G+D)\alpha^{5/2}N}$ and $\eta=\frac{2F^*}{(G+\sigma)^2N}$ as defined in the theorem. 

Next, note that both Theorem \ref{thm:o2nc-formal} and Theorem \ref{thm:OGD-formal} hold since the explicit update of $\Delta_{t+1}$ is equivalent to
\begin{align*}
    \Delta_{t+1} = \argmin_\Delta \langle\beta^{-t}\g_t, \Delta\rangle + \frac{1}{2\eta_t}\|\Delta-\Delta_t\|^2 + \left(\frac{1}{\eta_{t+1}}-\frac{1}{\eta_t}\right)\|\Delta\|^2 + \frac{\mu_{t+1}}{2}\|\Delta\|^2.
\end{align*}
Also recall that $\regret_n(\u_n) = \sum_{t=1}^n \langle\beta^{-t}\g_t,\Delta_t-\u_n\rangle + \calR_t(\Delta_t) - \calR_t(\u_n)$.
Therefore, upon substituting $\tilde\g_t=\beta^{-t}\g_t$, $\eta_t=\beta^t\eta$, $\mu_t=\beta^{-t}\mu$ and $\|\u_n\|=D$ into Theorem \ref{thm:OGD-formal}, we have
\begin{align*}
    \E\regret_n(\u_n) 
    &\le \left(\frac{2}{\eta_{n+1}}+\frac{\mu_{n+1}}{2}\right)\E\|\u\|^2 + \frac{1}{2}\sum_{t=1}^n \eta_t \E\|\tilde\g_t\|^2 \\
    &= \left(\frac{2}{\eta}+\frac{\mu}{2}\right)D^2\beta^{-(n+1)} + \frac{\eta}{2} \sum_{t=1}^n \beta^{-t}\E\|\g_t\|^2.
\end{align*}
By Assumption \ref{as:optimization}, $\E\|\g_t\|^2 = \E\|\nabla F(\x_t)\|^2 + \E\|\nabla F(\x_t)-\g_t\|^2 \le G^2+\sigma^2$. Moreover, $\sum_{t=1}^n\beta^{-t} \le \frac{\beta^{-n}}{1-\beta}$. Therefore,
\begin{align*}
    \beta^{n+1}\E\regret_n(\u_n)
    &\le \left(\frac{2}{\eta}+\frac{\mu}{2}\right)D^2 + \frac{\eta(G^2+\sigma^2)}{2\alpha}
    \intertext{Upon substituting $\eta=\frac{2D\sqrt{\alpha}}{G+\sigma}$ (note that $\frac{G^2+\sigma^2}{G+\sigma}\le G+\sigma$) and $\mu=\frac{24cD}{\alpha^2}$, we have}
    &\le \frac{2D(G+\sigma)}{\sqrt{\alpha}} + \frac{12cD^3}{\alpha^2}.
\end{align*}
Consequently, with $\alpha \le \frac{1}{2}$ (so that $\beta^{-1}\le 2$), we have
\begin{align*}
    & \frac{1}{DN}\left(\beta^{N+1}\E\regret_N(\u_N) + \alpha \sum_{n=1}^N \beta^n \E\regret_n(\u_n)\right) \\
    &\le \frac{1+2\alpha N}{DN} \left( \frac{2D(G+\sigma)}{\sqrt{\alpha}} + \frac{12cD^3}{\alpha^2} \right)
    \lesssim \frac{G+\sigma}{N} + \frac{cD^2}{\alpha^2 N} + (G+\sigma)\sqrt{\alpha} + \frac{cD^2}{\alpha}.
\end{align*}
Upon substituting this into the convergence guarantee in Theorem \ref{thm:o2nc-formal}, we have
\begin{align*}
    \E \|\nabla F(\overline\x)\|_c
    &\le \frac{F^*}{DN} + \frac{2G+\sigma}{\alpha N} + \sigma\sqrt{\alpha} + \frac{12c D^2}{\alpha^2} 
    + \frac{1}{DN}\left( \beta^{N+1}\E\regret_N(\u_N) + \alpha \sum_{n=1}^N \beta^n \E\regret_n(\u_n) \right) \\
    &\lesssim \frac{F^*}{DN} + \frac{G+\sigma}{\alpha N} + (G+\sigma)\sqrt{\alpha} + \frac{cD^2}{\alpha^2} 
    \intertext{ With $D=\frac{F^*}{(G+\sigma)\sqrt{\alpha}N}$ and $\alpha=\max\{N^{-2/3}, \frac{(F^*)^{4/7}c^{2/7}}{(G+\sigma)^{6/7}N^{4/7}}\}$, we have}
    &\lesssim \frac{G+\sigma}{\alpha N} + (G+\sigma)\sqrt{\alpha} + \frac{(F^*)^2c}{(G+\sigma)^2\alpha^3N^2} 
    \lesssim \frac{G+\sigma}{N^{1/3}} + \frac{(F^*)^{2/7}(G+\sigma)^{4/7}c^{1/7}}{N^{2/7}}.
    \qedhere
\end{align*}
\end{proof}


\end{document}